\definecolor{oceanboatblue}{rgb}{0.0, 0.47, 0.75}
\newcommand{\vect}[1]{\boldsymbol{\mathbf{#1}}}
\newcommand{\Ex}{\mathbb{E}}
\newcommand{\Aspace}{\mathcal{A}}
\newcommand{\Sspace}{\mathcal{S}}
\newcommand{\Wspace}{\mathcal{W}}
\newcommand{\w}{\vect{w}}
\DeclareMathOperator*{\argmax}{arg\,max}
\newcommand{\pigpi}{\pi^{\text{GPI}}}
\newcommand{\ccs}{\mathrm{CCS}}
\newcommand{\gpi}{\mathrm{GPI}}
\newcommand{\model}{p}
\newcommand{\buffer}{\mathcal{B}}
\newcommand{\Vset}{\mathcal{V}}
\newcommand{\Wset}{\mathcal{M}}
\newcommand{\Wsupport}{\mathcal{M}}
\title[Sample-Efficient MORL via GPI Prioritization]{Sample-Efficient Multi-Objective Learning via\\ Generalized Policy Improvement Prioritization}
\author{Lucas N. Alegre}
\institution{Artificial Intelligence Lab - Vrije Universiteit Brussel}
\author{Ana L. C. Bazzan}
\affiliation{
  \institution{Institute of Informatics - Federal University of Rio Grande do Sul}
  \city{Porto Alegre - RS}
  \country{Brazil}}
\email{{lnalegre,bazzan}@inf.ufrgs.br}
\author{Diederik M. Roijers}
\author{Ann Nowé}
\affiliation{
\institution{Artificial Intelligence Lab - Vrije Universiteit Brussel}
\city{Brussels}
\country{Belgium}}
\email{{diederik.roijers,ann.nowe}@vub.be}
\author{Bruno C. da Silva}
\affiliation{
  \institution{University of Massachusetts}
  \city{Amherst - MA}
  \country{USA}}
\email{bsilva@cs.umass.edu}
\begin{abstract}
Multi-objective reinforcement learning (MORL) algorithms tackle sequential decision problems where agents may have different \textit{preferences} over (possibly conflicting) reward functions.
Such algorithms often learn a set of policies (each optimized for a particular agent preference) that can later be used to solve problems with novel preferences.
We introduce a novel algorithm that uses Generalized Policy Improvement (GPI) to define principled, formally-derived prioritization schemes that improve sample-efficient learning. They implement active-learning strategies by which the agent can \textit{(i)} identify the most promising preferences/objectives to train on at each moment, to more rapidly solve a given MORL problem; and \textit{(ii)} identify which previous experiences are most relevant when learning a policy for a particular agent preference, via a novel Dyna-style MORL method.
We prove our algorithm is guaranteed to always converge to an optimal solution in a finite number of steps, or an $\epsilon$-optimal solution (for a bounded $\epsilon$) if the agent is limited and can only identify possibly sub-optimal policies.
We also prove that our method monotonically improves the quality of its partial solutions while learning. Finally, we introduce a bound that characterizes the maximum utility loss (with respect to the optimal solution) incurred by the partial solutions computed by our method throughout learning.
We empirically show that our method outperforms state-of-the-art MORL algorithms in challenging multi-objective tasks, both with discrete and continuous state and action spaces.
\end{abstract}
\keywords{Multi-Objective RL; Model-Based RL; GPI}
\newcommand{\BibTeX}{\rm B\kern-.05em{\sc i\kern-.025em b}\kern-.08em\TeX}
\begin{document}


\pagestyle{fancy}
\fancyhead{}


\maketitle

 
\section{Introduction}

Reinforcement learning (RL) algorithms \cite{Sutton&Barto2018} have achieved remarkable successes in a wide range of complex tasks~(e.g., \cite{Silver+2017,Bellemare+2020,Wurman+2022}).
In RL, tasks are typically modeled via a single scalar reward function, which encodes the agent's objective. In many real-life settings, by contrast, agents are tasked with optimizing behaviors that trade-off between multiple---possibly conflicting---objectives, each of which is modeled via a reward function. As an example, consider a robot that needs to trade off between locomotion speed, battery usage, and accuracy in reaching a goal location. 
Multi-Objective RL (MORL)~\cite{Hayes+2022} algorithms tackle such a challenge.
In MORL settings, a \textit{utility function} exists that combines (into a single scalar value) the agent's preferences toward optimizing each of its objectives. The goal of MORL algorithms is to rapidly identify policies that maximize the utility function, given any preferences an agent may have.

Standard RL algorithms are often sample-inefficient since they may require the agent to interact with its environment a large number of times~\cite{Sutton&Barto2018}. This is often infeasible, e.g., in applications where interactions are costly or risky. The sample efficiency problem is exacerbated in MORL algorithms, since they often have to learn not a single policy, but a \textit{set} of policies---each designed to optimize a particular trade-off between the agent's preferences \cite{Yang+2019}.

In this paper, we introduce a novel MORL algorithm, with important theoretical guarantees, that improves sample efficiency via two novel prioritization techniques. Such principled techniques allow the agent to \textbf{\textit{(i)}} identify the most promising preferences/objectives to train on at each moment, to more rapidly solve a MORL problem; and \textbf{\textit{(ii)}} identify which previous experiences are most relevant when learning a policy for a particular agent preference, via a novel Dyna-style MORL method.
We show formal analyses and proofs regarding our algorithm's convergence properties, as well as the maximum utility loss (performance) incurred by the transient solutions computed by our method throughout the learning process.

To design a method with these properties, we exploit an important concept in the RL literature: Generalized Policy Improvement (GPI)~\cite{Barreto+2020}. GPI is a policy transfer technique that---given a set of policies---constructs a novel policy guaranteed to be at least as good as any of the available ones.
First, recall that if the utility function of a MORL problem is a linear combination of the agent's objectives, optimal solutions are sets of policies known as \textit{convex coverage sets} (CCS)~\cite{Roijers+2013}. Given a CCS, agents can \textit{directly} identify the optimal solution to any novel preferences.
MORL algorithms that learn a CCS (e.g., \cite{Alegre+2022, Mossalam+2016}) may be sample inefficient \textbf{\textit{(i)}} due to the heuristics they use to determine which preferences to train on, at any given moment during the construction of a CCS; and \textbf{\textit{(ii)}} because they can only improve a CCS after optimal (or near-optimal) policies are identified---which may require a large number of samples. We address the first issue via a novel GPI-based prioritization technique for selecting which preferences to train on. It is based on a lower bound on performance improvements that are formally guaranteed to be achievable, and more accurately and reliably identifies the most relevant preferences to train on (when constructing a CCS) compared to existing heuristics. 
To address the second issue, we show that our method is an anytime algorithm that incrementally/monotonically improves the quality of its CCS, even if given \textit{intermediate} (possibly sub-optimal) policies for different preferences. This improves sample efficiency: our method identifies intermediate CCSs with formally bounded maximum utility loss even if there are constraints on the number of times the agent can interact with its environment.
Our algorithm is guaranteed to always converge to an optimal solution in a finite number of steps, or an $\epsilon$-optimal solution (for a bounded $\epsilon$) if the agent is limited and can only identify possibly sub-optimal policies.
We prove that it monotonically improves the quality of its CCS and formally bound the maximum utility loss (with respect to an optimal solution) incurred by any of its partial solutions.

A complementary approach for increasing sample efficiency is to use a model-based approach to learn policies for different preferences. Once a model is learned, it can be used to identify policies for \textit{any} preferences, thus minimizing the required number of interactions with the environment. We introduce a novel Dyna-style MORL algorithm---the first model-based MORL technique capable of dealing with continuous state spaces. Dyna algorithms are based on generating simulated experiences to more rapidly update a value function or policy. An important question, however, is \textit{which} artificial experiences should be generated to accelerate learning. We introduce a new, principled GPI-based prioritization technique for identifying which experiences are most relevant to rapidly learn the optimal policy for novel preferences. 

In summary, we introduce two novel GPI-based prioritization techniques for use in MORL settings to improve sample efficiency. We formally show that our algorithm is supported by important theorems characterizing its convergence properties and performance bounds. 
We empirically show that our method outperforms state-of-the-art MORL algorithms in challenging multi-objective tasks, both with discrete and continuous state and action spaces.

\section{Background}

In this section, we discuss important definitions (and corresponding notation) associated with MORL, GPI, and model-based RL.

\subsection{Multi-Objective Reinforcement Learning}
\label{sec:morl_intro}

The multi-objective RL setting (MORL) is used to model problems where an agent needs to optimize possibly conflicting objectives, each modeled via a separate reward function. MORL problems are modeled as \textit{Multi-objective Markov decision processes} (MOMDP), which differ from regular MDPs in that the reward function is vector-valued.
A MOMDP is defined as a tuple $M \equiv (\mathcal{S},\mathcal{A},p,\vect{r},\mu,\gamma)$, where $\mathcal{S}$ is a state space,  $\mathcal{A}$ is an action space, $p(\cdot|s,a)$ is the distribution over next states given state $s$ and action $a$, $\vect{r} : \mathcal{S} {\times} \mathcal{A} {\times} \mathcal{S} {\mapsto} \mathbb{R}^m$ is a multi-objective reward function with $m$ objectives, $\mu$ is an initial state distribution, and $\gamma \in [0,1)$ is a discounting factor.
Let $S_t$, $A_t$, and $\vect{R}_t = \vect{r}(S_t,A_t,S_{t+1})$ be random variables corresponding to the state, action, and vector reward, respectively, at time step $t$. 
A policy $\pi : \Sspace \mapsto \Aspace$ is a function mapping states to actions.
The \textit{multi-objective action-value function} of a policy $\pi$ for a given state-action pair $(s,a)$ is defined as:
\begin{eqnarray}
\label{eq:mo-q-function}
     \vect{q}^\pi(s,a) \equiv \Ex_\pi \left[ \sum_{i=0}^{\infty} \gamma^i \vect{R}_{t+i}  \ | \ S_t=s, A_t=a \right],
\end{eqnarray}
where $\vect{q}^\pi(s,a)$ is an $m$-dimensional vector whose $i$-th entry is the expect return of $\pi$ under the $i$-th objective, and $\Ex_{\pi}[\cdot]$ denotes expectation over trajectories induced by $\pi$.
Let $\vect{v}^\pi {\in} \mathbb{R}^m$ be the \textit{multi-objective value vector} of $\pi$ under the initial state distribution $\mu$:
\begin{equation}
\label{eq:vectorvalue}
    \vect{v}^\pi \equiv \Ex_{S_0 \sim \mu} \left[ \vect{q}^\pi(S_0, \pi(S_0)) \right],
\end{equation}
where $v^{\pi}_{i}$ is the value of $\pi$ under the $i$-th objective.
For succinctness, we henceforth refer to $\vect{v}^\pi$ as the \textit{value vector} of policy $\pi$. 
A \textit{Pareto frontier} is a set of nondominated multi-objective value functions $\vect{v}^\pi$: 
$
    \mathcal{F} \equiv \{\vect{v}^\pi \ |\ \nexists \ \pi' \mathrm{ s.t. } \ \vect{v}^{\pi'} \succ_p \vect{v}^{\pi} \}, 
$
where $\succ_p$ is the \textit{Pareto dominance relation} $\vect{v}^\pi \succ_p \vect{v}^{\pi'} \iff (\forall i : v^\pi_i \geq v^{\pi'}_{i}) \land  (\exists i : v^{\pi}_i > v^{\pi'}_{i})$.
In general, the optimal solution to a MOMDP is a set of all policies $\pi$ such that $\vect{v}^\pi$ is in the Pareto frontier.

Let a \textit{user utility function} (or scalarization function) $u : \mathbb{R}^m \mapsto \mathbb{R}$ be a mapping from the multi-objective value of policy $\pi$, $\vect{v}^\pi$, to a scalar. 
Utility functions often linearly combine the value of a policy under each of the $m$ objectives using a set of weights $\w$:
$
   u(\vect{v}^\pi, \w)
    = v_{\w}^{\pi}
    = \vect{v}^\pi\cdot \w,
$
\noindent where each element of $\w \in \mathbb{R}^m$ specifies the relative importance of each objective.
The space of weight vectors, $\Wspace$, is an $m$-dimensional simplex: $\textstyle\sum_i w_i = 1, w_i \geq 0, i=1,...,m$.
For any given constant $\w$ (i.e., a particular way of weighting objectives), the original MOMDP collapses into an MDP with reward function $r_{\w}(s,a,s') = \vect{r}(s,a,s') \cdot \w$.
Given a linear utility function $u$, we can define a \textit{convex coverage set} (CCS)  \cite{Roijers+2013} as a finite convex subset of $\mathcal{F}$, such that there exists a policy in the set that is optimal with respect to any linear preference $\w$. In other words, the CCS is the set of nondominated multi-objective value functions $\vect{v}^\pi$ where the dominance relation is now defined over scalarized values:
\begin{equation}
\label{eq:ccs}
    \ccs \equiv \{\vect{v}^\pi \in \mathcal{F} \ | \ \exists \w\ \text{s.t.}\ \forall \vect{v}^{\pi'} \in \mathcal{F}, \vect{v}^{\pi} \cdot \w \geq \vect{v}^{\pi'} \cdot \w \},
\end{equation}
\noindent where $\w$ is a vector of weights used by the scalarization function $u$.
Hence, the optimal solution to a MOMDP, under linear preferences, is a finite convex subset of the Pareto frontier.

\subsection{Generalized Policy Improvement}

\textit{Generalized Policy Improvement} (GPI) is a generalization of the \textit{policy improvement} step \cite{Puterman2005}, which underlies most of the RL algorithms. The key difference is that GPI defines a new policy that improves over a \textit{set} of policies, instead of a single one.
GPI was originally proposed to be employed in the setting where policies are evaluated using their \textit{successor features} \cite{Barreto+2017,Barreto+2020}. However, Alegre et al.~\cite{Alegre+2022} showed that GPI can also be employed with multi-objective action-value functions (Eq.~\eqref{eq:mo-q-function}).

Let $\Pi = \{\pi_i\}_{i=1}^{n}$ be a set of previously-learned policies with corresponding multi-objective action-value functions $\{\vect{q}^{\pi_i}\}_{i=1}^{n}$.
Given any weight vector $\w {\in} \Wspace$, we can directly evaluate all policies $\pi_i\in\Pi$ by computing the dot product of its multi-objective action-values and the weight vector: $q^{\pi_i}_{\w}(s,a) = \vect{q}^{\pi_i}(s,a) \cdot \w$.
This step is known as \textit{generalized policy evaluation} (GPE) \cite{Barreto+2020}.
We define the \textit{GPI policy} as a policy $\pi: \Sspace\times\Wspace \mapsto \Aspace$ that is constructed from a set of policies $\Pi$, and is conditioned on a weight vector $\w \in \Wspace$:
\begin{equation}
\label{eq:sf-gpi}
    \pigpi(s; \w) \in \argmax_{a\in\Aspace} \max_{\pi \in \Pi} q^{\pi}_{\w}(s,a).
\end{equation}
Let $q^{\text{GPI}}_{\w}(s,a)$ be the action-value function of policy $\pigpi(\cdot;\w)$.
The GPI theorem \cite{Barreto+2017} ensures that $q^{\text{GPI}}_{\w}(s,a) \geq \max_{\pi \in \Pi} q_{\w}^{\pi}(s,a)$ for all $(s,a) \in \mathcal{S} \times \mathcal{A}$.
In other words, GPI allows for the rapid identification of a policy that is guaranteed to perform at least as well as any of the policies $\pi_i\in\Pi$, for any given weight vector $\w\in\Wspace$.
This result is also valid and can be extended to the scenario where  $q^{\pi_i}$ is replaced with estimates/approximations, $\tilde{Q}^{\pi_i}$ \cite{Barreto+2018}.

\subsection{Model-Based RL}

In model-based RL \cite{Moerland+2020}, an agent learns---based on experiences collected while interacting with the environment---an approximate model $\tilde{p}(s',r|s,a)$ of the joint distribution over the next states and rewards.
This model can be employed in multiple ways, such as: \textit{(i)} to perform Dyna-style planning, i.e., generate simulated experiences to more rapidly learn a policy or value function \cite{Seijen&Sutton2013,Janner+2019,Pan+2019}; \textit{(ii)} produce improved model-augmented update targets for use by  temporal-difference algorithms~\cite{Feinberg+2018,Buckman+2018,Abbas+2020}; \textit{(iii)} to perform planning, online, via Model Predictive Control techniques~\cite{Chua+2018}; and \textit{(iv)} to exploit model gradients with respect to its parameters to improve the efficiency of policy parameter updates \cite{Deisenroth&Rasmussen2011,Doro&Jaskowski2020}.
One widely-used model-based framework is based on the Dyna architecture \cite{Sutton1990}, by which an agent learns a model and updates a value function/policy using real and model-generated, simulated experiences.
This significantly improves sample efficiency, as demonstrated by recent successes of RL when using expressive function approximators~\cite{Pan+2018,Janner+2019,Kaiser+2020}.
The key insight is that performing additional updates, using model-generated experiences, reduces the need to frequently interact with the environment, which may be expensive and risky in real-world applications.

A crucial component of Dyna-style algorithms is the \textit{search-control} mechanism, which prioritizes and samples (from the learned model) experiences to be used in Dyna planning.
The classic Dyna algorithm samples state-action pairs \textit{uniformly}. 
Other variants have been studied. For example, the Prioritized Sweeping algorithm~\cite{Moore&Atkeson1993} prioritizes states or state-action pairs with higher absolute \textit{temporal-difference error} (TD-error) $\delta_t$ when performing value updates. 
A similar idea has been extensively investigated in models that are based, e.g., on Prioritized Experience Replay buffers \cite{Lin1992,Schaul+2016,Fujimoto+2020}, which are a form of (limited) non-parametric models \cite{Seijen&Sutton2015}.
According to the Prioritized Experience Replay method, the probability of sampling a given state-action pair, $P(s,a)$, is defined by
\begin{align}
    \label{eq:tderror}
    P(S_t,A_t) \propto |\delta_t|, \quad \delta_t = R_t + \gamma \max_{a'\in\Aspace} q^\pi(S_{t+1},a') - q^\pi(S_t,A_t).
\end{align}
Intuitively, states with higher TD-error are more likely to cause larger changes in post-update value functions, and therefore are also likely to accelerate policy learning.

\section{Sample-Efficient MORL}
\label{sec:sample-eff-MORL}

Since our objective is to improve sample efficiency in MORL settings (and given the observations in Section~\ref{sec:morl_intro}), we propose to design a method to rapidly learn a set of policies $\Pi$ whose values approximate a CCS (Eq.~\eqref{eq:ccs}).
In this Section, we first introduce a novel method based on GPI to identify the most promising preferences/weight vectors $\w\in\Wspace$ to train on, at each moment, while constructing a CCS (Section~\ref{sec:gpi-ls}).
Then, we introduce a principled technique to prioritize experiences (for use when optimizing a given preference) based on the formally-guaranteed improvements achievable via a one-step GPI process (Section~\ref{sec:per-gpi}).
These contributions are combined in Section~\ref{sec:gpi-pd} to derive an effective algorithm that approximates the CCS in discrete- and continuous-state settings.

\subsection{Prioritizing Weight Vectors via GPI}
\label{sec:gpi-ls}

In this Section, we introduce a principled algorithm that iteratively constructs a set of policies, $\Pi$, whose value vectors, $\Vset$, approximate the CCS.
At each iteration, it selects a weight vector $\vect{w}\in\Wspace$ based on the formally-guaranteed improvements achievable via GPI, and learns a new policy $\pi_{\w}$ specialized in optimizing $\w$.

{\small
\begin{algorithm}[ht]
\caption{\textbf{GPI} \textbf{L}inear \textbf{S}upport (\textbf{GPI-LS})}
\label{alg:algo1}

\DontPrintSemicolon
\SetKwInOut{Input}{Input}
\Input{MOMDP $M$}

$\pi_{\w}, \vect{v}^{\pi_{\w}} \leftarrow \mathrm{NewPolicy}(\w=[1,0,...,0]^\top)$\;
$\Pi \leftarrow \{\pi_{\w}\}, \Vset \leftarrow \{\vect{v}^{\pi_{\w}}\}$,
$\Wsupport \leftarrow \{\}$\;
\While{True}{
    $\Wspace_{\mathrm{corner}} \leftarrow \mathrm{CornerWeights(\Vset)} \setminus \Wsupport$\;
    \If{$\Wspace_{\mathrm{corner}}$ is empty}{
       \Return{$\Pi, \Vset$}; \Comment*[r]{Found CCS (or $\epsilon$-CCS)}
   }

    $\w \leftarrow \argmax_{\w\in\Wspace_{\mathrm{corner}}}
   (v^{\gpi}_{\w} - \max_{\pi\in\Pi} v^{\pi}_{\w})$\;
   
   $\pi_{\w},\vect{v}^{\pi_{\w}}, \mathrm{done} \leftarrow \mathrm{NewPolicy}(\w, \Pi)$\;
   \If{$\mathrm{done}$}{
       Add $\w$ to $\Wsupport$; \Comment*[r]{Adds $\w$ to support of partial CCS}
   }
      Add $\pi_{\w}$ to $\Pi$ and $\vect{v}^{\pi_{\w}}$ to $\Vset$\;
      $\Pi, \Vset \leftarrow \mathrm{RemoveDominated}(\Pi,\Vset)$\;
}

\end{algorithm}
}

We start by defining the \textit{corner weights}~\cite{Roijers2016} of a given set of value vectors, $\Vset$. Corner weights $\Wspace_{\mathrm{corner}} \subset \Wspace$ are a finite subset of the weight vector space with relevant properties we will explore.
\begin{definition}
Let $\Vset = \{\vect{v}^{\pi_i}\}_{i=1}^{n}$ be a set of multi-objective value functions of $n$ policies. \textit{Corner weights} are the weights contained in the vertices of a polyhedron, $P$, defined as:
\begin{equation}
\label{eq:polyhedron-corner-weights}
    P = \{ \vect{x} \in \mathbb{R}^{d+1} \ | \ \vect{V}^{+}\vect{x} \leq \vect{0}, \textstyle\sum_i w_i = 1, w_i \geq 0, \forall i \},
\end{equation}
where $\vect{V}^{+}$ is a matrix whose rows store the elements of $\Vset$ and is augmented by a column vector of $-1$'s. 
Each vector $\vect{x} {=} (w_1, ..., w_d, v_{\w})$ in $P$ is
composed of a weight vector and its scalarized value.
\end{definition}
Intuitively, corner weights are the weight vectors for which the policy selected in the maximization $\max_{\pi\in\Pi} v^{\pi}_{\w}$ changes. These are weight vectors for which two or more policies in $\Pi$ share the same value with respect to the above-mentioned maximization.
Extrema weights (weights where only one element is 1 and all others are 0) are special cases of corner weights.
Notice that the number of vertices of the polyhedron $P$ is finite when $\Vset$ is finite.
The importance of corner weights comes from the following theorem:
\begin{theorem}
\label{th:corner-weights}
(Theorem 7 of Roijers~\cite{Roijers2016}). Let $\Pi=\{\pi_i\}_{i=1}^{n}$ be a set of $n$ policies with corresponding value vectors $\Vset = \{\vect{v}^{\pi_i}\}_{i=1}^{n}$. Let $\Delta(\w,\Pi) = v^{*}_{\w} - \max_{\pi\in\Pi}v^{\pi}_{\w}$ be the \textit{utility loss} of weight vector $\w \in \Wspace$ given the policy set $\Pi$; that is, the difference between the value of the optimal policy for $\w$ and the value that can be obtained if using one of the policies in $\Pi$ for solving $\w$. Then, a weight vector $\w \in \arg\max_{\w \in \Wspace} \,\,\Delta(\w,\Pi)$ is one of the \textit{corner weights} of $\Vset$.
\end{theorem}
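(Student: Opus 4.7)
The plan is to exploit the piecewise-linear convex structure of the two terms in $\Delta(\w,\Pi) = v^*_\w - \max_{\pi\in\Pi}v^\pi_\w$. First I would note that $\max_{\pi\in\Pi}v^\pi_\w$, being the pointwise maximum of the finitely many linear maps $\w \mapsto \vect{v}^{\pi_i}\cdot\w$, is convex and piecewise-linear on the simplex $\Wspace$. It induces a polyhedral decomposition of $\Wspace$: each top-dimensional cell $C_j$ is the closed set of weights for which a particular policy $\pi_j\in\Pi$ attains the inner maximum. The $0$-dimensional faces of this decomposition are exactly the weights at which two or more policies tie in the maximization, plus the extrema of the simplex; and these are precisely the projections onto the $\w$-coordinates of the vertices of the polyhedron $P$ defined in Eq.~\eqref{eq:polyhedron-corner-weights}, i.e., the corner weights of $\Vset$.

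Next, I would show that $\Delta(\cdot,\Pi)$ is convex on every cell $C_j$. On $C_j$ the subtracted term collapses to a single linear function, so
\begin{equation*}
    \Delta(\w,\Pi) \;=\; v^*_\w - \vect{v}^{\pi_j}\cdot\w \qquad \text{for all } \w\in C_j.
\end{equation*}
The optimal scalarized value $v^*_\w$ is the supremum of the linear functions $\w\mapsto \vect{v}^{\pi}\cdot\w$ over all policies (equivalently, over the CCS), hence convex in $\w$. Subtracting a linear function preserves convexity, so $\Delta(\cdot,\Pi)$ is convex on each closed bounded polytope $C_j$.

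I would then invoke the standard fact that a convex function on a closed bounded convex polytope attains its maximum at a vertex. Applying this cell by cell, and taking the best over the finitely many cells covering $\Wspace$, gives a global maximizer of $\Delta(\cdot,\Pi)$ over $\Wspace$ located at a vertex of some $C_j$, which by the first step is a corner weight of $\Vset$. This yields an element of $\arg\max_{\w\in\Wspace}\Delta(\w,\Pi)$ that is a corner weight, as required.

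The main obstacle will be the polyhedral-geometry step of matching the vertex set of the upper-envelope decomposition of $\{\w\mapsto \vect{v}^{\pi_i}\cdot\w\}_i$ with the vertex set of $P$ in Eq.~\eqref{eq:polyhedron-corner-weights}. Concretely, I would need to verify that lifting each cell $C_j$ by its scalarized value $v_\w = \vect{v}^{\pi_j}\cdot\w$ produces the lower face of $P$, so that projecting the vertices of $P$ onto the $\w$-coordinates recovers exactly the $0$-faces of the decomposition; careful bookkeeping of the simplex constraints $\sum_i w_i = 1,\, w_i\geq 0$ is also needed to ensure the extrema of $\Wspace$ are correctly included among the corner weights. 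A minor secondary issue is ensuring $v^*_\w$ is well-defined and convex without appealing to a finiteness assumption on the set of optimal policies; this follows because $v^*_\w$ is a supremum of affine functions and therefore convex regardless.
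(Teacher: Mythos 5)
The paper does not prove this statement---it is imported verbatim as Theorem~7 of Roijers (2016)---so there is no in-paper proof to compare against. Your argument is correct and is essentially the standard proof from that source: restrict to the linearity cells of the piecewise-linear convex function $\max_{\pi\in\Pi}v^{\pi}_{\w}$, note that on each cell $\Delta(\cdot,\Pi)$ equals the convex function $v^{*}_{\w}$ minus an affine one and is therefore convex, hence maximized at a cell vertex, and identify cell vertices with corner weights; the lifting step you flag does go through, because adjoining the active constraint $v=\vect{v}^{\pi_j}\cdot\w$ to the $d$ independent active constraints (ties and simplex facets $w_k=0$, plus $\sum_i w_i=1$) that define a vertex of a cell yields $d+1$ independent active constraints of $P$, i.e.\ a vertex of $P$. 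The only caveat worth recording is that the statement should be read existentially---\emph{some} maximizer of $\Delta(\cdot,\Pi)$ is a corner weight, since $\Delta$ may be constant along a face---which is exactly the version your cell-by-cell argument delivers and the version Alg.~1 actually needs.
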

Due to this theorem, when selecting a weight vector to train on when constructing a CCS, we only need to consider the finite set of corner weights, $\Wspace_{\mathrm{corner}}$, instead of the infinite weight simplex $\Wspace$. However, this does tell us \textit{how} to select a weight vector (among all the corner weights) to more rapidly learn the CCS.
Let $v^{\gpi}_{\w}$ be the scalarized value of the GPI policy (Eq.~\ref{eq:sf-gpi}) for the weight vector $\w$.
We propose to prioritize weight vectors based on the magnitude of the improvement that can be formally achieved via GPI.
In particular, given the corner weights $\Wspace_{\mathrm{corner}}$ of the current value set, $\Vset$, we select the weight vector $\w{\in}\Wspace_{\mathrm{corner}}$ given by:
\begin{eqnarray}
\label{eq:best-w}
   \argmax_{\w\in\Wspace_{\mathrm{corner}}} 
   (v^{\gpi}_{\w} - \max_{\pi\in\Pi} v^{\pi}_{\w}).
\end{eqnarray}
Intuitively, Eq.~\eqref{eq:best-w} identifies the corner weight for which we are guaranteed to achieve the maximum possible improvement via GPI.

In Alg.~\ref{alg:algo1} we show an iterative algorithm, \textbf{GPI} \textbf{L}inear \textbf{S}upport (\textbf{GPI-LS}), that constructs a CCS in a finite number of iterations based on the above-mentioned ideas.\footnote{A key challenge in proving convergence and bounded utility loss of our algorithm is that, unlike existing techniques, it does \textit{not} require optimal (or $\epsilon$-optimal) policies to be returned at every iteration. Instead, it can incrementally (and monotonically) improve its CCS's quality even when given partially learned, possibly suboptimal policies.}
Let $\mathrm{NewPolicy}(\w,\Pi)$ be any RL algorithm that searches for a policy that optimizes a weight vector $w{\in}\Wspace$, starting from an initial candidate policy $\pi_{\w}$ set to $\argmax_{\pi\in\Pi} v^{\pi}_{\w}$. We assume this algorithm  returns the value vector of this policy, $\vect{v}^{\pi_{\w}}$, and a flag ($\mathrm{done}$) indicating if it has reached its stopping criterion.
When this happens, it adds the corresponding new policy to $\Pi$ and removes all dominated policies; i.e., policies whose value vectors are no longer the best for any weight vector.
To analyze this algorithm, we first consider the best-case scenario where the $\mathrm{NewPolicy(\w,\Pi)}$ is capable of identifying optimal policies for any given weight vectors. Later, we relax this assumption and show that our algorithm's theoretical guarantees still hold even if that is not the case.
We show that it holds strong theoretical guarantees bounded only by the sub-optimality of the underlying RL algorithm chosen by the user.

\begin{theorem}
\label{th:guaranteeCCS}
Let $\mathrm{NewPolicy}(\w,\Pi)$ in Alg.~\ref{alg:algo1} be any algorithm that returns an optimal policy, $\pi^{*}_{\w}$, for a given weight vector $\w$.
Then, Alg.~\ref{alg:algo1} is guaranteed to find a $\ccs$ in a finite number of iterations.
\end{theorem}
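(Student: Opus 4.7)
The plan is to decompose the theorem into a correctness claim---any execution that exits returns a $\ccs$---and a termination claim---the main loop exits after finitely many iterations. Throughout I will use the standing hypothesis that $\mathrm{NewPolicy}(\w,\Pi)$ returns an optimal policy $\pi^{*}_{\w}$ (with $\vect{v}^{\pi^{*}_{\w}} \cdot \w = v^{*}_{\w}$) and sets $\mathrm{done}=\mathrm{True}$, so on every iteration Line~10 appends $\w$ to $\Wsupport$ and Line~11 appends the corresponding optimal value vector to $\Vset$.

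For the correctness half, suppose the loop exits on Line~6; then $\mathrm{CornerWeights}(\Vset) \subseteq \Wsupport$. For any $\w \in \Wsupport$, the optimal-policy hypothesis yields $\vect{v}^{\pi^{*}_{\w}} \cdot \w = v^{*}_{\w}$, and this value is preserved by $\mathrm{RemoveDominated}$: interpreting dominance strictly, no policy added later can strictly outperform $\pi^{*}_{\w}$ at $\w$, so $\pi^{*}_{\w}$ (or some other optimizer of $\w$) remains in $\Pi$. Hence $\max_{\pi \in \Pi} v^{\pi}_{\w} = v^{*}_{\w}$, i.e.\ the utility loss $\Delta(\w, \Pi) = 0$ at every corner weight of the terminal $\Vset$. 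Theorem~\ref{th:corner-weights} says that $\max_{\w \in \Wspace} \Delta(\w, \Pi)$ is attained at some corner weight of $\Vset$; consequently this maximum is zero, $\Pi$ is optimal at every $\w \in \Wspace$, and $\Vset$ is a $\ccs$.

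For termination I would argue in two stages. First, every vector added to $\Vset$ on Line~11 is the value vector of an optimal policy for some weight and therefore belongs to the $\ccs$; since the $\ccs$ is finite in the MOMDP setting considered, and $\Vset$ remains a subset of it after $\mathrm{RemoveDominated}$, the set $\Vset$ can be strictly modified only finitely many times (at most $|\ccs|$). Second, once $\Vset$ has stabilized, the polyhedron $P$ of Eq.~\eqref{eq:polyhedron-corner-weights} is fixed and has a finite vertex set, so $\mathrm{CornerWeights}(\Vset)$ is a fixed finite collection; each subsequent iteration monotonically enlarges $\Wsupport$ by one element of this collection, so $\Wspace_{\mathrm{corner}} \setminus \Wsupport$ must become empty in at most $|\mathrm{CornerWeights}(\Vset)|$ further steps. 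The main obstacle I anticipate is the bookkeeping between an evolving $\Vset$ and the persistent $\Wsupport$: weights already in $\Wsupport$ may cease to be corner weights after an update of $\Vset$, while entirely new corner weights may appear as $\Vset$ grows. This is reconciled by observing that Line~4 recomputes the current corner set before subtracting $\Wsupport$, so stale entries in $\Wsupport$ are harmless, and the two-stage argument above cleanly decouples the (finite) growth of $\Vset$ from the subsequent (finite) saturation of $\Wsupport$.
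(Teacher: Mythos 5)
Your proposal is correct, and its correctness half is essentially the paper's argument: both conclude via Theorem~\ref{th:corner-weights} that zero utility loss at every corner weight of the terminal $\Vset$ implies zero utility loss on all of $\Wspace$, hence a $\ccs$. Where you genuinely diverge is the termination half. The paper establishes finiteness with a single global bound: it observes that every corner weight the algorithm could ever select is a vertex of a polyhedron of the form in Eq.~\eqref{eq:polyhedron-corner-weights}, and bounds the total count by the vertices of the polyhedron built from \emph{all} $|\Aspace|^{|\Sspace|}$ deterministic policies; combined with the fact that each selected weight enters $\Wsupport$ and is never re-selected, termination follows. Your two-stage decomposition---$\Vset$ is always a subset of the finite $\ccs$ and so changes at most $|\ccs|$ times, after which the fixed corner set is saturated by the monotone growth of $\Wsupport$---buys something real here: it cleanly handles the fact that the corner weights of an \emph{evolving} $\Vset$ are not simply a subset of the corner weights of a larger value set (removing constraints from the polyhedron can create new vertices), which the paper's global bound glosses over. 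Your version also makes explicit two points the paper leaves implicit: that the hypothesis must include $\mathrm{done}=\mathrm{True}$ on every call (otherwise a weight could be re-selected), and that $\mathrm{RemoveDominated}$ never evicts an optimizer of a weight already in $\Wsupport$. Both arguments reach the same conclusion; yours is the more careful accounting of the interaction between the growing $\Vset$ and the persistent $\Wsupport$.
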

\begin{proof}
The proof follows from the fact that at each iteration, the set of corner weights, $\Wspace_{\mathrm{corner}}$, is finite since the number of the vertices of the polyhedron $P$ is finite (Eq.~\eqref{eq:polyhedron-corner-weights}).
Once the termination condition of $\mathrm{NewPolicy}(\w,\Pi)$ is met ($\mathrm{done}=\mathrm{True}$), it returns the optimal policy for a given $\w\in\Wspace_{\mathrm{corner}}$, $\w$ is added to $\Wsupport$, and never selected again. This is ensured because the set of candidate corner weights, $\Wspace_{\mathrm{corner}}$, computed by GPI-LS in any given iteration, does not (by construction) include previously-evaluated corner weights.
Since $\Sspace$ and $\Aspace$ are finite, the number of potential corner weights is bounded by the (finite) number of vertices of the polyhedron $P$ defined with respect to the set $\Vset$ containing all $|\Aspace|^{|\Sspace|}$ possible policies.
Because \textit{(i)} GPI-LS selects a corner weight at each iteration; \textit{(ii)} $\mathrm{NewPolicy}(\w,\Pi)$ returns an optimal policy for $\w$, which is added to $\Wsupport$ and never selected again; and \textit{(iii)} there exists a finite number of corner weights, it follows that in the worst-case, GPI-LS will test all (finite) corner weights. Thus, after a finite number of iterations, all corner weights of the CCS will be in $\Wsupport$.
At this point, $\Wspace_{\mathrm{corner}}$ will be empty and the algorithm will return $\Pi$ and $\Vset$ (lines 5--6).
We can ensure that the returned set $\Vset$ is a CCS due to Theorem~\ref{th:corner-weights}. When there are no more corner weights to be analyzed ($\Wspace_{\mathrm{corner}}$ is empty), $\Wsupport$ contains all corner weights and their corresponding optimal policies.
By Theorem~\ref{th:corner-weights}, this means that it is not possible to make improvements to any weights vector $\w\in\Wspace$ and thus $\Vset$ (and its corresponding policies $\Pi$) is a CCS.
\end{proof}

We now relax the assumption that $\mathrm{NewPolicy}$ finds optimal solutions and show that if $\mathrm{NewPolicy}(\w,\Pi)$ converges to a local-minimum (an $\epsilon$-optimal policy), GPI-LS returns an $\epsilon$-CCS.\footnote{Notice that when the CCS is infinite (e.g., there is an infinite number of optimal policies), GPI-LS still converges to a CCS asymptotically/in the limit.}
\begin{definition}
\label{def:epsilon-ccs}
A set of value vectors $\Vset = \{\vect{v}^{\pi_i}\}_{i=1}^{n}$ (associated with policy set $\Pi=\{\pi_i\}_{i=1}^{n}$) is an $\epsilon$-CCS if, for all weight vectors $\w \in \Wspace$, the corresponding utility loss $\Delta(\w,\Pi)$ is at most $\epsilon$:
\begin{align*}
    \max_{\w\in\Wspace}\Delta(\w,\Pi) =
    \max_{\w\in\Wspace} (v^*_{\vect{w}} - \max_{\pi\in\Pi} v^{\pi}_{\w}) \leq \epsilon .
\end{align*}
\end{definition}

Intuitively, an $\epsilon$-CCS is a convex coverage set whose maximum utility loss is at most $\epsilon$; i.e., based on it, for \textit{any} possible weight vectors $w{\in}\Wspace$, it is possible to identify a policy whose value differs from the value of the optimal policy for $w$ by at most $\epsilon$.
\begin{theorem}
\label{th:guaranteeEpsCCS}
Let $\mathrm{NewPolicy}(\w,\Pi)$ in Alg.~\ref{alg:algo1} be an algorithm that produces an $\epsilon$-optimal policy, $\pi_w$, when its termination condition is met (when it returns $\mathrm{done}=\mathrm{True}$); that is, $v^{*}_{\w} - v^{\pi_{\w}}_{\w} \leq \epsilon$. Then, Alg.~\ref{alg:algo1} is guaranteed to return an $\epsilon$-$\ccs$.
\end{theorem}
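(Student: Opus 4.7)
The plan is to adapt the proof of Theorem 2 in two stages: first, re-establish termination in finitely many iterations (essentially unchanged from the optimal case), and then argue that the resulting final set $\Vset$ achieves utility loss at most $\epsilon$ on every weight in the simplex.

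For termination, I would reuse the argument from Theorem 2 verbatim: at every iteration the candidate set $\Wspace_{\mathrm{corner}} \setminus \Wsupport$ is finite (the polyhedron $P$ has finitely many vertices, since $\Vset$ is finite and is bounded above by the $|\Aspace|^{|\Sspace|}$ possible deterministic policies), and whenever $\mathrm{NewPolicy}$ returns $\mathrm{done}=\mathrm{True}$, the corner weight $\w$ is added to $\Wsupport$ and removed from all future candidate sets by construction. Thus after finitely many iterations either a new policy is produced for some unvisited corner weight, or $\Wspace_{\mathrm{corner}} \setminus \Wsupport$ becomes empty and GPI-LS returns.

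The heart of the proof is bounding the utility loss at termination. The plan is to show two facts: \textbf{(a)} for every $\w \in \Wsupport$, the final policy set $\Pi$ satisfies $\Delta(\w,\Pi) \leq \epsilon$; and \textbf{(b)} every corner weight of the final $\Vset$ lies in $\Wsupport$. For (a), observe that the moment $\w$ was added to $\Wsupport$, $\mathrm{NewPolicy}(\w,\Pi)$ returned a policy $\pi_{\w}$ with $v^{*}_{\w} - v^{\pi_{\w}}_{\w} \leq \epsilon$, which was then inserted into $\Pi$. The quantity $\max_{\pi\in\Pi} v^{\pi}_{\w}$ is monotonically non-decreasing across iterations: adding policies can only raise it, and $\mathrm{RemoveDominated}$ removes $\pi'$ from $\Pi$ only when some $\pi'' \in \Pi$ dominates it, so $\max_{\pi\in\Pi} v^{\pi}_{\w}$ is preserved. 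Hence at termination $\max_{\pi\in\Pi} v^{\pi}_{\w} \geq v^{\pi_{\w}}_{\w} \geq v^{*}_{\w} - \epsilon$, giving $\Delta(\w,\Pi) \leq \epsilon$. For (b), the stopping rule $\Wspace_{\mathrm{corner}} \setminus \Wsupport = \emptyset$ is precisely the statement that every corner weight of the current $\Vset$ is contained in $\Wsupport$.

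Combining these two facts with Theorem~\ref{th:corner-weights} finishes the argument: the maximum of $\Delta(\cdot,\Pi)$ over the whole simplex $\Wspace$ is attained at some corner weight of $\Vset$; by (b) that corner weight is in $\Wsupport$; and by (a) the loss there is at most $\epsilon$. Therefore $\max_{\w\in\Wspace} \Delta(\w,\Pi) \leq \epsilon$, which is exactly Definition~\ref{def:epsilon-ccs}, so $\Vset$ is an $\epsilon$-CCS. The main subtlety I expect is (a), specifically verifying that pruning dominated policies cannot cause the best scalarized value at a previously-processed weight to drop below the $\epsilon$-guarantee; this is a short invariant check but is the one place where carelessness could break the bound.
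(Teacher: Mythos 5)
Your proof is correct, but it takes a genuinely different route from the paper's. The paper argues via a decreasing potential: it defines $\Delta_i = \max_{\w\in\Wspace}\Delta(\w,\Pi_i)$, notes by Theorem~\ref{th:corner-weights} that if $\Delta_i > \epsilon$ the maximizer is a corner weight $\w'$, and claims that $\w'$ will eventually be selected and solved to within $\epsilon$, forcing $\Delta_j < \Delta_i$; iterating this until $\Delta \leq \epsilon$ gives the result. You instead prove termination (recycling Theorem~\ref{th:guaranteeCCS}) and then verify a terminal invariant: \emph{(a)} every $\w\in\Wsupport$ satisfies $\Delta(\w,\Pi)\leq\epsilon$, preserved because $\max_{\pi\in\Pi}v^{\pi}_{\w}$ is non-decreasing under policy additions and under $\mathrm{RemoveDominated}$ (which only discards policies that are best for no weight); and \emph{(b)} the stopping rule forces $\mathrm{CornerWeights}(\Vset)\subseteq\Wsupport$, so Theorem~\ref{th:corner-weights} transfers the $\epsilon$ bound from $\Wsupport$ to all of $\Wspace$. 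Your decomposition is arguably tighter: it sidesteps the paper's delicate step that the specific worst corner weight $\w'$ identified at iteration $i$ ``will be selected'' at some later iteration $j$ --- in principle $\w'$ could cease to be a corner weight of subsequent value sets without ever being chosen, a case your termination-based argument handles automatically. What the paper's route buys in exchange is an explicit monotone-improvement statement ($\Delta_j < \Delta_i$) that supports the anytime/monotonicity claims made elsewhere in the text, which your proof does not directly yield. Your flagged subtlety about pruning is exactly the right invariant to check, and your resolution of it is sound.
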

\begin{proof}
Let $\Pi_i$ be the set of policies computed by GPI-LS at iteration $i$ and $\mathcal V_i$ be the corresponding set of value functions. Let $\Delta_i = \,\,\max_{\w \in \Wspace} \Delta(\w, \Pi_i)$ be the maximum utility loss incurred by the partial CCS computed by GPI-LS at iteration $i$.
If $\Delta_i \leq \epsilon$, then by definition the set of value vectors $\Vset_i$, computed by GPI-LS at iteration $i$, is an $\epsilon$-CCS.
If not, then by Theorem~\ref{th:corner-weights} there exists a corner weight $\w'$ such that $\Delta(\w',\Pi_i) = \Delta_i$.
Because GPI-LS will, in the worst-case, test all corner weights and $\mathrm{NewPolicy}$ returns $\epsilon$-optimal policies (where $\epsilon{<}\Delta_i)$, then at some iteration $j{>}i$, GPI-LS will select $\w'$, compute its optimal policy ($\pi_{\w'}$) and add it to $\Pi_j$. Because by definition $\Delta(\w',\Pi_j) \leq \epsilon$, it follows that $\Delta_j {<} \Delta_i$; that is, the maximum utility loss incurred by the partial CCS can be strictly decreased. Thus, within a finite number of iterations, $k$, we can guarantee that $\Delta_{i+k} \leq \epsilon$. At this point, by definition, an $\epsilon$-CCS has been identified.
\end{proof}

Finally, we introduce a formal bound that characterizes the maximum utility loss (with respect to the optimal solution) incurred by  \textit{any partial} solutions computed by our method, prior to convergence, in case the agent follows a GPI policy based on the partially-computed CCS produced by our algorithm at an arbitrary iteration.\footnote{This result is a straightforward adaptation of Theorem 2 of Barreto et al.~\cite{Barreto+2017} to the setting where we wish to bound the maximum utility loss of a partially-computed CCS.} 

\begin{theorem}
Let $\Vset = \{\vect{v}^{\pi_i}\}_{i=1}^{n}$ be a set of value vectors with corresponding policies $\Pi=\{\pi_i\}_{i=1}^{n}$ optimizing weight vectors $\Wset=\{\w_i\}_{i=1}^n$.
Let $
    |q^{*}_{\w_i}(s,a) - q^{\pi_{i}}_{\w_i}(s,a)| \leq \delta \text{ for all } (s,a) \in \Sspace{\times}\Aspace, \, and\, i \, \in \{1,...,n\}. 
$
Let $\vect{r}_{\mathrm{max}} = \max_{s,a}||\vect{r}(s,a)||$. Then, for any $\w \in \Wspace$,
\begin{equation}
\label{eq:bound-gpi}
    q^{*}_{\w}(s,a) - q^{\gpi}_{\w}(s,a) \leq \frac{2}{1-\gamma} (\vect{r}_{\mathrm{max}} \min_{i}||\w-\w_i|| + \delta).
\end{equation}
\end{theorem}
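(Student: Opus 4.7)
The plan is to follow the standard telescoping-decomposition template behind Theorem 2 of Barreto et al.~\cite{Barreto+2017}, adapted to the present multi-objective Q-function setting. The whole argument rests on a Lipschitz-in-$\w$ property of scalarized Q-values, plus one application of the GPI inequality. First I would prove the auxiliary lemma that for any policy $\pi$ and any weight vectors $\w,\w' \in \Wspace$,
\[
|q^\pi_\w(s,a) - q^\pi_{\w'}(s,a)| \;\leq\; \tfrac{\vect{r}_{\max}\,\|\w - \w'\|}{1-\gamma}.
\]
This drops out of Eq.~\eqref{eq:mo-q-function}: writing $q^\pi_\w - q^\pi_{\w'} = \vect{q}^\pi(s,a)\cdot(\w-\w')$, applying Cauchy--Schwarz, and bounding $\|\vect{q}^\pi(s,a)\| \leq \vect{r}_{\max}\sum_{t\geq 0}\gamma^t = \vect{r}_{\max}/(1-\gamma)$. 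The same Lipschitz bound transfers to $q^*_\w$: pick whichever of $\pi^*_\w,\pi^*_{\w'}$ optimizes the larger side, and use optimality on the other side to sandwich the difference by the one-policy bound just obtained.

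Next, fix an arbitrary $(s,a)$ and set $i^{*} \in \argmin_i \|\w-\w_i\|$. The GPI theorem, invoked on the set $\Pi$ at weight $\w$, gives $q^{\gpi}_\w(s,a) \geq q^{\pi_{i^{*}}}_\w(s,a)$. I then telescope the optimality gap through two intermediate quantities:
\begin{align*}
q^{*}_\w(s,a) - q^{\gpi}_\w(s,a)
&\;\leq\; q^{*}_\w(s,a) - q^{\pi_{i^{*}}}_\w(s,a) \\
&\;=\; \bigl[q^{*}_\w - q^{*}_{\w_{i^{*}}}\bigr](s,a) \\
&\quad + \bigl[q^{*}_{\w_{i^{*}}} - q^{\pi_{i^{*}}}_{\w_{i^{*}}}\bigr](s,a) \\
&\quad + \bigl[q^{\pi_{i^{*}}}_{\w_{i^{*}}} - q^{\pi_{i^{*}}}_\w\bigr](s,a).
\end{align*}
The first and third bracketed terms are each bounded by $\vect{r}_{\max}\|\w-\w_{i^{*}}\|/(1-\gamma)$ by the Lipschitz lemma; the middle term is bounded by $\delta$ by hypothesis. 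Summing and using $\|\w-\w_{i^{*}}\| = \min_i\|\w-\w_i\|$ gives
\[
q^{*}_\w(s,a) - q^{\gpi}_\w(s,a) \;\leq\; \tfrac{2\vect{r}_{\max}\min_i\|\w-\w_i\|}{1-\gamma} + \delta,
\]
which implies Eq.~\eqref{eq:bound-gpi} at once, since $\delta \leq 2\delta/(1-\gamma)$ for $\gamma\in[0,1)$.

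There is no real conceptual obstacle; the only points that require care are (i)~transferring the Lipschitz bound from a fixed policy $\pi$ to the \emph{optimal} value function $q^{*}_\w$, where the optimizing policy itself varies with $\w$, and (ii)~checking that the middle term of the decomposition uses the uniform hypothesis $|q^{*}_{\w_i}-q^{\pi_i}_{\w_i}|\leq \delta$ \emph{at the same} $(s,a)$ that appears in the telescoping inequality, which is guaranteed because the hypothesis is stated for all $(s,a)\in\Sspace\times\Aspace$. Everything else is a mechanical chaining of the Lipschitz lemma, the $\delta$-suboptimality hypothesis, and the GPI inequality.
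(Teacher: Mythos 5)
Your proof is correct, and it is precisely the argument the paper intends: the theorem is stated as a direct adaptation of Theorem~2 of Barreto et al., whose proof uses exactly this combination of the GPI inequality with a telescoping decomposition through $q^{*}_{\w_{i^{*}}}$ and $q^{\pi_{i^{*}}}_{\w_{i^{*}}}$, with the Lipschitz-in-$\w$ bound $\|\vect{q}^{\pi}(s,a)\|\leq \vect{r}_{\max}/(1-\gamma)$ controlling the two outer terms and the $\delta$-suboptimality hypothesis controlling the middle one. Your handling of the two points of care (the Lipschitz property of $q^{*}_{\w}$ via the optimal policy of the scalarized MDP, and the uniform-in-$(s,a)$ hypothesis) is sound, and your bound is in fact slightly tighter than the stated one.
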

This theorem guarantees that as the coverage of the weight vector set, $\Wspace$, increases with respect to the simplex, the maximum utility loss of Alg.~\ref{alg:algo1} is strictly bounded at any given iteration during learning.

\subsection{Prioritizing Experiences via GPI}
\label{sec:per-gpi}

A complementary approach for increasing sample efficiency is to use a model-based RL algorithm to learn policies for different preferences. In this paper, we introduce an experience prioritization technique based on a Dyna-style MORL algorithm (introduced and discussed later). An important open question related to efficiently deploying Dyna algorithms is to determine \textit{which} artificial, model-simulated experiences should be generated to accelerate learning. We introduce a new, principled GPI-based prioritization technique (based on Theorem~\ref{th:gpi-equality}) for identifying which experiences are most relevant to rapidly learn the optimal policy, $\pi_{\w}$, that optimizes a preference $\w \in \Wspace$. Importantly, unlike similar existing methods (e.g., Prioritized Experience Replay~\cite{Schaul+2016}), our method is \textit{specially} designed to accelerate learning in MORL problems.

We start by introducing a theorem based on which we will be able to define a principled experience prioritization scheme for accelerating learning in MORL problems.
\begin{theorem}
\label{th:gpi-equality}
Let $\Pi$ be an arbitrary set of policies, and let $\pi_{\w}$ in $\Pi$ be a deterministic policy tasked with optimizing some $\w\in\Wspace$. Then, $q^{\gpi}_{\w}(s,a) = q^{\pi}_{\w}(s,a)$ \textbf{\textit{for all}}
state-action pairs in $\Sspace {\times} \Aspace$ if and only if  $q^*_{\w}(s,a) = q^{\pi}_{\w}(s,a)$. In other words, $\pi_{\w}$ is guaranteed to be an optimal policy for $\w$ iff the GPI policy computed over $\Pi$, for optimizing $\w$, cannot improve the $q$-function of $\pi_{\w}$ for any   state-action pairs.
\footnote{The proof of this theorem can be found in the Appendix.}
\end{theorem}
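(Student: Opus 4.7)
The plan is to prove each direction separately, leveraging the standard GPI inequality $q^{\gpi}_{\w}(s,a) \geq \max_{\pi' \in \Pi} q^{\pi'}_{\w}(s,a)$ stated in Section 2.2, together with the Bellman optimality characterization (a deterministic policy is optimal iff it is greedy with respect to its own action-value function).

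For the easy direction ($\Leftarrow$), assume $q^*_{\w} = q^{\pi_{\w}}_{\w}$ everywhere. Since $\pi_{\w} \in \Pi$, the GPI inequality yields $q^{\gpi}_{\w}(s,a) \geq \max_{\pi' \in \Pi} q^{\pi'}_{\w}(s,a) \geq q^{\pi_{\w}}_{\w}(s,a)$ at every state-action pair. On the other hand, no policy (in particular the GPI policy, which is a well-defined deterministic policy in the MDP scalarized by $\w$) can exceed the optimal value, so $q^{\gpi}_{\w}(s,a) \leq q^*_{\w}(s,a) = q^{\pi_{\w}}_{\w}(s,a)$. The two inequalities pinch $q^{\gpi}_{\w}$ to $q^{\pi_{\w}}_{\w}$ on all of $\Sspace \times \Aspace$.

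For the harder direction ($\Rightarrow$), introduce the abbreviation $\tilde{q}_{\w}(s,a) = \max_{\pi' \in \Pi} q^{\pi'}_{\w}(s,a)$ so that $\pigpi(\,\cdot\,;\w)$ is greedy with respect to $\tilde{q}_{\w}$. Assuming $q^{\gpi}_{\w} = q^{\pi_{\w}}_{\w}$ pointwise, the chain $q^{\pi_{\w}}_{\w} \leq \tilde{q}_{\w} \leq q^{\gpi}_{\w} = q^{\pi_{\w}}_{\w}$ collapses to equalities, so in fact $\tilde{q}_{\w} = q^{\pi_{\w}}_{\w}$ everywhere. Consequently $\pigpi(s;\w) \in \argmax_{a} q^{\pi_{\w}}_{\w}(s,a)$ for every $s$. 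Evaluating both sides of the hypothesized equality at $a = \pigpi(s;\w)$ gives $v^{\gpi}_{\w}(s) = \max_{a} q^{\pi_{\w}}_{\w}(s,a)$, while evaluating at $a = \pi_{\w}(s)$ gives $v^{\pi_{\w}}_{\w}(s) = q^{\pi_{\w}}_{\w}(s, \pi_{\w}(s))$. Since the equality of $q$-functions implies equality of the induced value functions, we conclude $q^{\pi_{\w}}_{\w}(s, \pi_{\w}(s)) = \max_{a} q^{\pi_{\w}}_{\w}(s,a)$ for every $s$, i.e.\ $\pi_{\w}$ is greedy with respect to its own action-value function.

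Substituting this greediness into the Bellman expectation equation for $\pi_{\w}$ turns it into the Bellman optimality equation, whose unique bounded fixed point is $q^*_{\w}$; hence $q^{\pi_{\w}}_{\w} = q^*_{\w}$. The step I expect to require the most care is the reverse direction, specifically the subtle move from ``$\pigpi$ is greedy w.r.t. $q^{\pi_{\w}}_{\w}$'' to ``$\pi_{\w}$ itself is greedy w.r.t. $q^{\pi_{\w}}_{\w}$''; determinism of $\pi_{\w}$ is essential here so that $v^{\pi_{\w}}_{\w}(s) = q^{\pi_{\w}}_{\w}(s, \pi_{\w}(s))$ without any stochastic averaging, and the ``for all $(s,a)$'' hypothesis is what lets us evaluate at the specific action $\pi_{\w}(s)$ and compare with the maximum.
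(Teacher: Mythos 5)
Your backward direction ($\Leftarrow$) is correct, and your overall strategy for the forward direction---collapse the chain $q^{\pi_{\w}}_{\w} \leq \max_{\pi'\in\Pi} q^{\pi'}_{\w} \leq q^{\gpi}_{\w}$ to equalities via the GPI inequality and then recover the Bellman optimality equation---is the natural one. However, one step in the forward direction is false as stated: equality of two policies' action-value functions does \emph{not} imply equality of their state-value functions, so your intermediate conclusion that $\pi_{\w}$ is greedy with respect to $q^{\pi_{\w}}_{\w}$ at \emph{every} state does not follow. Concretely, consider a state $\bar{s}$ that lies in the support of no transition distribution $p(\cdot\,|\,s,a)$ and from which all actions lead into the same absorbing dynamics: two deterministic policies that agree everywhere except at $\bar{s}$ have identical $q$-functions (the $q$-values at $\bar{s}$ depend only on what happens after leaving $\bar{s}$, which never returns there), yet their values at $\bar{s}$ can differ, and one of them is not greedy at $\bar{s}$. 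One can arrange such an example with $q^{\gpi}_{\w} = q^{\pi_{\w}}_{\w}$ everywhere while $v^{\gpi}_{\w}(\bar{s}) > v^{\pi_{\w}}_{\w}(\bar{s})$ and $\pi_{\w}$ takes a non-greedy action at $\bar{s}$; the theorem's conclusion $q^{\pi_{\w}}_{\w} = q^{*}_{\w}$ still holds there, but your route to it passes through two false intermediate claims.

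The repair is local. Rather than substituting greediness into the Bellman expectation equation of $\pi_{\w}$, substitute the established equalities into the Bellman equation of the GPI policy itself: $q^{\gpi}_{\w}(s,a) = \Ex\left[ r_{\w}(s,a,S') + \gamma\, q^{\gpi}_{\w}\left(S', \pigpi(S';\w)\right)\right]$. Since the collapsed chain gives $\max_{\pi'\in\Pi} q^{\pi'}_{\w} = q^{\pi_{\w}}_{\w}$ pointwise, we have $\pigpi(S';\w) \in \argmax_{a'} q^{\pi_{\w}}_{\w}(S',a')$, and replacing $q^{\gpi}_{\w}$ by $q^{\pi_{\w}}_{\w}$ throughout yields $q^{\pi_{\w}}_{\w}(s,a) = \Ex\left[r_{\w}(s,a,S') + \gamma \max_{a'} q^{\pi_{\w}}_{\w}(S',a')\right]$ for all $(s,a)$; thus $q^{\pi_{\w}}_{\w}$ is a fixed point of the Bellman optimality operator and equals $q^{*}_{\w}$. (Alternatively, keep your route but weaken the greediness claim: $v^{\gpi}_{\w} - v^{\pi_{\w}}_{\w} \geq 0$ pointwise and has zero expectation under every $p(\cdot\,|\,s,a)$ by equality of the $q$-functions, so $\pi_{\w}$ is greedy at every \emph{successor} state, which is all the optimality equation requires.) Everything else in your argument, including the pinching in the easy direction, is sound.
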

As a result of this theorem, it follows that to rapidly learn an optimal policy, we may wish to bring the value of $q^{\pi}_{\w}(s,a)$ closer $q^{\gpi}_{\w}(s,a)$: when they are equal in all state-action pairs, we have found $q^*_{\w}(s,a)$.
State-action pairs whose value gap, $q^{\gpi}_{\w}(s,a) - q^{\pi}_{\w}(s,a)$, is maximal (or large) are pairs that, if updated, more rapidly approximate $q^{\pi}_{\w}(s,a)$ and $q^{\gpi}_{\w}(s,a)$ (in terms of Max-Norm distance). Intuitively, these are promising candidate experiences to be sampled from the model and used in updates to improve policy $\pi_{\w}$. 
By decreasing the above-mentioned gap, we are guaranteed to move $q^{\pi}_{\w}(s,a)$ closer to $q^*_{\w}(s,a)$. When this gap is zero for all state-actions pairs, we are guaranteed to have identified an optimal policy for $\w$.
Based on these observations, which follow from Theorem \ref{th:gpi-equality}, we propose (during the process of learning $\pi_{\w}$) to prioritize experiences  proportionally to the magnitude of the value improvement resulting from using a GPI policy. That is, for a given weight vector $\w {\in} \Wspace$ and state-action pair, we assign corresponding priorities $P_{\w}(s,a)$ proportionally to the gap $q^{\gpi}_{\w}(s,a) - q^{\pi}_{\w}(s,a)$:
\begin{eqnarray}
   P_{\w}(s,a) \propto q^{\gpi}_{\w}(s,a) - q^{\pi}_{\w}(s,a) .
\end{eqnarray}
Notice that in practice we do not have direct access to the action-values of the GPI policy, $q^{\gpi}_{\w}(s,a)$, unless we evaluate it beforehand with a policy evaluation algorithm (which may be costly).
We can, however, efficiently compute the value of employing GPI for a single time step.
Let $q^{1-\gpi}_{\w}(S_t,A_t)$ be defined as the value of executing action $A_t$ in state $S_t$, following the GPI policy for one step, and thereafter following the same policy $\pi \in \Pi$ that was used in this first step: $q^{1-\gpi}_{\w}(S_t,A_t) = \Ex[\vect{R}_t \cdot \vect{w} + \gamma \max_{a'\in\Aspace}\max_{\pi\in\Pi} q^\pi_{\w}(S_{t+1},a') ]$.
It is possible to show that $q^{\gpi}_{\w}(s,a) \geq q^{1-\gpi}_{\w}(s,a) \geq q^{\pi}_{\w}(s,a)$.
Importantly, Theorem~\ref{th:gpi-equality} \textit{still holds} if we replace $q^{\gpi}_{\w}(s,a)$ with $q^{1-\gpi}_{\w}(s,a)$. Given a transition $(S_t,A_t,\vect{R}_t,S_{t+1})$ and a weight vector $\w\in\Wspace$, we thus compute experience priorities as follows:
\begin{equation}
\label{eq:pr-1gpi}
    P_{\w}(S_t,A_t) \propto
    |\vect{R}_t \cdot \vect{w} + \gamma \max_{a'\in\Aspace}\max_{\pi\in\Pi} q^\pi_{\w}(S_{t+1},a')  -  q^{\pi}_{\w}(S_t,A_t)|
    .
\end{equation}
Notice that when $|\Pi|=1$, that is, when we only have one policy, Eq.~\eqref{eq:pr-1gpi} reduces to the commonly-used Prioritized Experience Replay prioritization scheme  (Eq.~\eqref{eq:tderror}).
Hence, we can see Eq.~\eqref{eq:pr-1gpi} as a form of \textit{generalized TD-error-based prioritization}.

\section{GPI-Prioritized Dyna}
\label{sec:gpi-pd}

We now introduce \textbf{GPI} \textbf{P}rioritized \textbf{D}yna (\textbf{GPI-PD}), a novel model-based MORL algorithm that improves sample efficiency by simultaneously using GPI-LS (Section~\ref{sec:gpi-ls}) to prioritize and select weight vectors to more rapidly construct a CCS, and our GPI-based experience prioritization scheme (Section~\ref{sec:per-gpi}) to more rapidly learn policies for a given preference.
GPI-PD learns an approximate multi-objective dynamics model, $\model$, predicting the next state and reward vector given a state-action pair.
This model is used to perform Dyna updates to multi-objective action-value functions via model-generated, simulated experiences.
Its pseudocode is shown in Alg.~\ref{alg:gpi-pd}. 

{\small
\begin{algorithm}[ht]
\caption{\textbf{GPI}-\textbf{P}rioritized \textbf{D}yna (\textbf{GPI-PD})}
\label{alg:gpi-pd}

\DontPrintSemicolon
\SetKwInOut{Input}{Input}
\SetKwProg{everyn}{}{ do}{}

Initialize action-value function $\vect{Q}_{\theta}(s,a,\w)$, dynamics model $\model_{\varphi}$, buffers $\buffer$ and $\buffer_{\mathrm{model}}$, weight support set $\Wset$\;

$\Wset \leftarrow$ extrema weights of $\Wspace$; $\w_0 \sim \Wset$\;
\For{$t = 0 ... \infty$}{

\everyn(\Comment*[f]{GPI Linear Support (Alg.~\ref{alg:algo1})}){Every $N$ time steps}{ 
    $\Vset \leftarrow$ evaluate $\vect{v}^{\pi_{\w}}$ for all $\w\in\Wset$\;
    $\Wset, \Vset \leftarrow \mathrm{RemoveDominated(\Wset,\Vset)}$\;
    $\Wspace_{\mathrm{corner}} \leftarrow \mathrm{CornerWeights(\Vset)}$\;
    Add to $\Wset$ the top-$k$ weight vectors in $\Wspace_{\mathrm{corner}}$ w.r.t. $\argmax_{\w\in\Wspace_{\mathrm{corner}}}
   (v^{\gpi}_{\w} - \max_{\pi\in\Pi} v^{\pi}_{\w})$\;
}

\If{$S_t$ is terminal}{
    $\w_t \sim \Wset$\;
    $S_t \sim \mu$\;
}

$A_t \leftarrow \pigpi(S_t; \w_t)$  \ (Eq.~\eqref{eq:fa-gpi}) \Comment*[r]{Follow GPI policy }

Execute $A_t$, observe $S_{t+1}$, and $\vect{R}_t$\;
Add $(S_t,A_t,\vect{R}_t,S_{t+1})$ to $\buffer$ with priority $P_{\w_t}(S_t,A_t)$\;

Update model $\model_{\varphi}$ with experience tuples from $\buffer$\;

\For(\Comment*[f]{GPI-Prioritized Dyna}){$H$ Dyna steps}{
    Sample $S \sim \buffer$ according to $P_{\w_t}$ \quad (Eq.~\eqref{eq:pr-1gpi})\;
    $A \leftarrow \pigpi(S;\w_t)$;
    $(\hat{S}', \hat{\vect{R}}) \sim \model_{\varphi}(\cdot|S,A)$\;
    Add $(S,A,\hat{\vect{R}},\hat{S}')$ to $\buffer_{\mathrm{model}}$\;
}
\vspace{0.1cm}
\Comment*[l]{Update multi-objective Q-function}
\For{$G$ gradient updates}{
   Build mini-batch $\{(S_i,A_i,\vect{R}_i,S'_i)\}_{i=1}^{b}$ with $\beta b$ tuples from $\buffer_{\mathrm{model}}$ and $(1-\beta)b$ tuples from $\buffer$\;
   
   Update $\vect{Q}_{\theta}$ by minimizing  $\mathcal{L}(\theta;\w_t) + \mathcal{L}(\theta;\w')$ w.r.t. $\theta$ via mini-batch gradient descent, where $\w'\sim\Wset$\;
   
   \mbox{Update priorities $P_{\w_t}$ of all pairs $(S_i,A_i)$ in the mini-batch}
}

}
\end{algorithm}
}

Notice that GPI-PD is a MORL algorithm designed to deal with high-dimensional and continuous state spaces; it models the action-value function via a single neural network ($\vect{Q}_{\theta}$) conditioned on state and weight vector, as in \cite{Abels+2019,Yang+2019}.
The multi-objective action-value function is defined as $\vect{Q}_{\theta}(s,a,\w) \approx \vect{q}^{\pi_{\w}}(s,a)$, where $\theta$ are the learned neural network parameters.
Similarly to \cite{Borsa+2019}, we now rewrite the GPI policy definition (Eq.~\ref{eq:sf-gpi}) so that the GPI policy can be defined with respect to an approximator:
\begin{eqnarray}
\label{eq:fa-gpi}
   \pigpi(s;\w) \in \argmax_{a\in\Aspace}\max_{\w'\in\Wset} \vect{Q}_{\theta}(s,a,\w') \cdot \w,
\end{eqnarray}
where $\Wset$ is the weight support set containing the corner weights selected by GPI-LS (lines 4--8 of Alg.~\ref{alg:gpi-pd}). 
Notice that we add to $\Wsupport$ the top-$k$ weight vectors, according to Eq.~\eqref{eq:best-w} (line 8). This implies that each top-performing weight will be optimized during a corresponding episode. We found that this accelerates learning in the function approximation setting. Finally, the parameters $\theta$ of $\vect{Q}_{\theta}$ are updated by minimizing the mean-squared error of the multi-objective TD error:
\begin{eqnarray}
\label{eq:loss-q}
  \mathcal{L}(\theta;\w) = \Ex_{(S,A,\vect{R},S')\sim\buffer}[(\vect{R} + \gamma \vect{Q}_{\theta^{-}}(S',a',\w) - \vect{Q}_{\theta}(S,A,\w))^2],
\end{eqnarray}
where $a' = \argmax_{a'\in\Aspace } \vect{Q}_{\theta^{-}}(S', a', \w) \cdot \w$ and $\theta^-$ are the parameters of a target neural network, which are updated to match $\theta$ periodically.
In line 22 of Alg.~\ref{alg:gpi-pd}, we optimize $\mathcal{L}(\theta;\w)$ both for the current episode's weight vector, $\w_t$, and for weight vectors $\w' {\sim} \Wsupport$ sampled from the weight support set.\footnote{We abuse notation and use the operator $\sim$ both to denote sampling from a given distribution, but also, when writing $x \sim Y$ (where $Y$ is a set and $x \in Y$) to refer to the process of sampling an element $x$ uniformly at random from the set $Y$.} Training in this manner is known to avoid catastrophic forgetting~\cite{Abels+2019}.

\subsection{Dyna with a Learned MOMDP Model}
\label{sec:model}

GPI-PD learns a model $\model_{\varphi}$ of the environment's dynamics and its multi-objective reward function. In particular, it represents the joint distribution, $\model_{\varphi}(S_{t+1},\vect{R}_t|S_t,A_t)$, of the next state and the \textit{complete multi-objective reward vector}, $\vect{R}_t$, when conditioned on state and action. Notice that this model may be used to accelerate learning of policies for \textit{any} $\w \in \Wspace$.

Before proceeding, we emphasize that GPI-PD is designed to tackle high-dimensional MORL problems with continuous state spaces. To achieve this, we extend state-of-the-art model-based algorithms used in the single-objective setting~\cite{Chua+2018,Janner+2019,Lai+2021}. We achieved this by employing an ensemble of probabilistic neural networks for predicting the reward vector, $\vect{R}_t$, instead of a scalar reward.
The learned model $\model_{\varphi}$ is composed of an ensemble of $n$ neural networks, $\{\model_{\varphi_i}\}_{i=1}^{n}$, each of which outputs the mean and diagonal covariance matrix of a multivariate Gaussian distribution:
\begin{equation}
    \model_{\varphi_i}(S_{t+1},\vect{R}_t \ | \ S_t,A_t) = \mathcal{N}(\mu_{\varphi_i}(S_t,A_t), \Sigma_{\varphi_i}(S_t,A_t)) .
\end{equation}
Each model in the ensemble is trained in parallel to minimize the following negative log-likelihood loss function, using different bootstraps of experiences in the buffer $\buffer$: 
\begin{equation}
    \mathcal{L}(\varphi) = \Ex_{(S_t,A_t,\vect{R}_t,S_{t+1})\sim\buffer}[ -\log \model_{\varphi}(S_{t+1},\vect{R}_t|S_t,A_t)].
\end{equation}
In lines 16--19 of Alg.~\ref{alg:gpi-pd}, GPI-PD samples states according to $P_{\w}(\cdot, A_t)$, where $A_t$ is the action given by the GPI policy. It then generates corresponding model-based simulated experiences for this state-action pair and stores them in the buffer $\buffer_{\mathrm{model}}$. We combine these experiences with ``real'' ones, collected by the agent while interacting with the environment (and stored in the buffer $\buffer$), according to a mixing ratio $\beta$. This results in a mini-batch of experiences that are used to optimize $\vect{Q}_{\theta}$. Combining experiences in this manner is a commonly-used approach when training model-based RL algorithms~\cite{Janner+2019,Pan+2022}.

\section{Experiments}
\begin{figure}[ht!]
    \centering
    \includegraphics[width=0.25\linewidth]{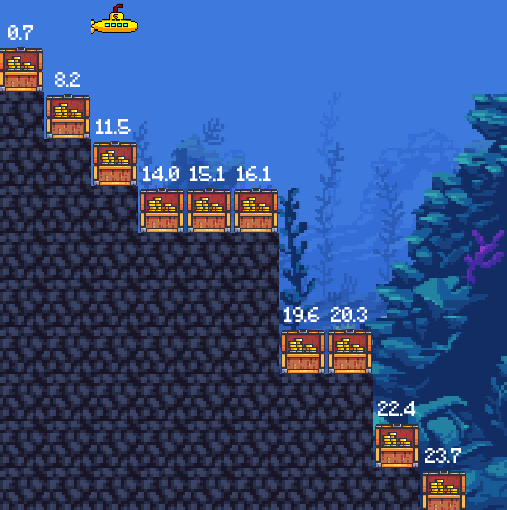}
    \includegraphics[width=0.25\linewidth]{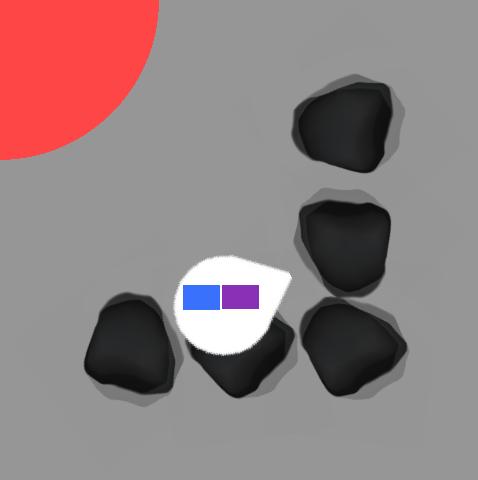}
    \includegraphics[width=0.25\linewidth]{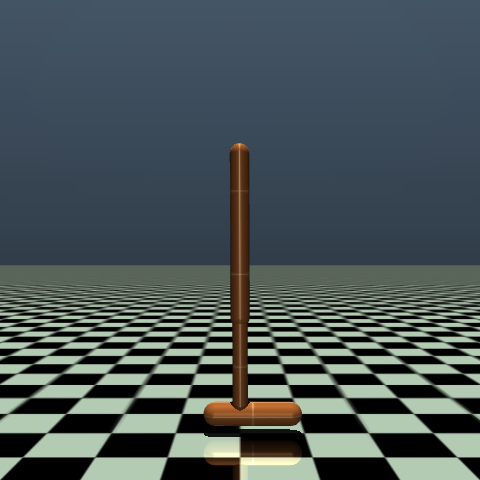}
    \caption{(a) Deep Sea Treasure; (b) Minecart; (c) MO-Hopper.}
    \label{fig:envs}
\end{figure}
\begin{figure*}[ht!]
    \centering
    \includegraphics[width=0.247\linewidth]{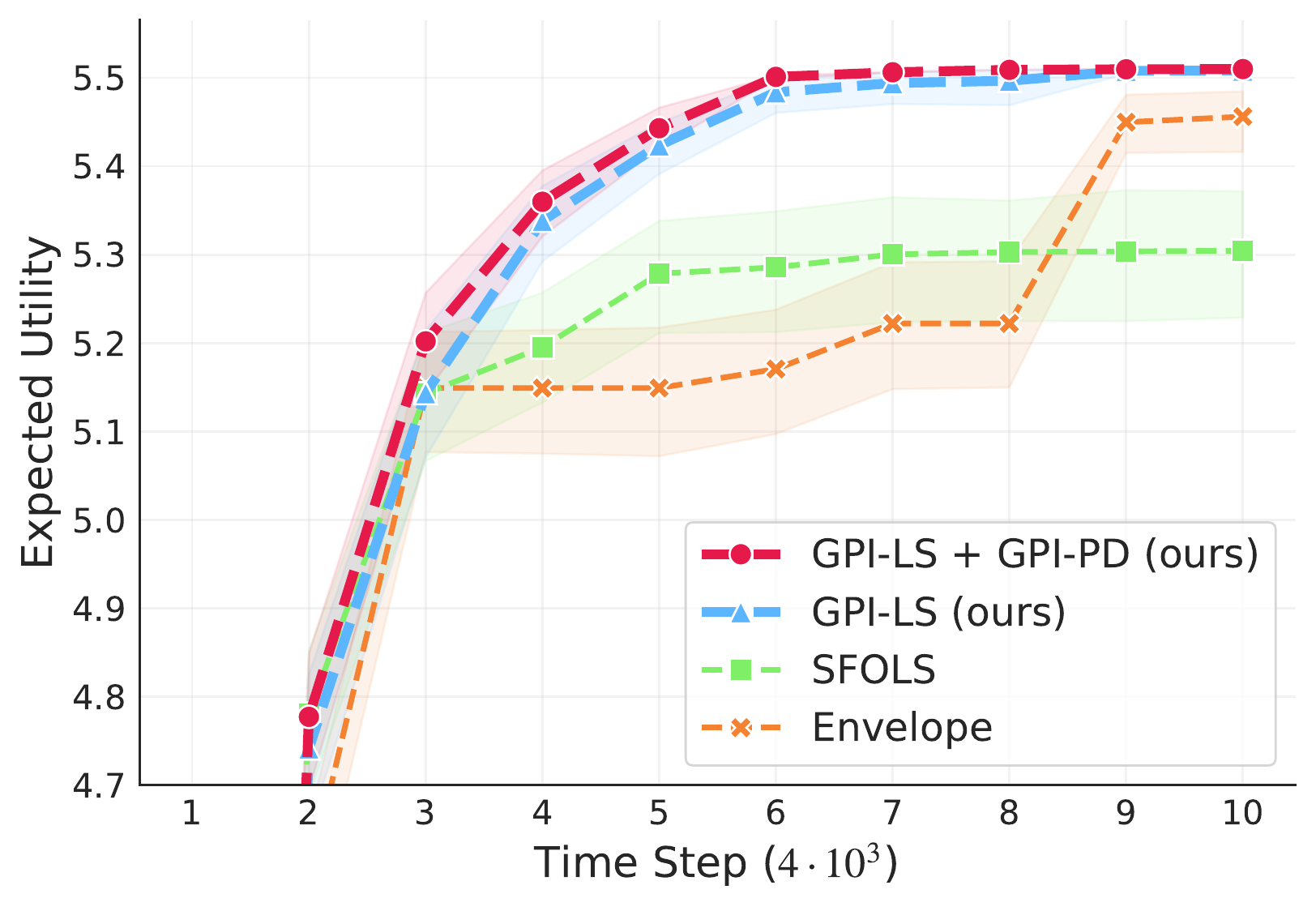}
     \includegraphics[width=0.247\linewidth]{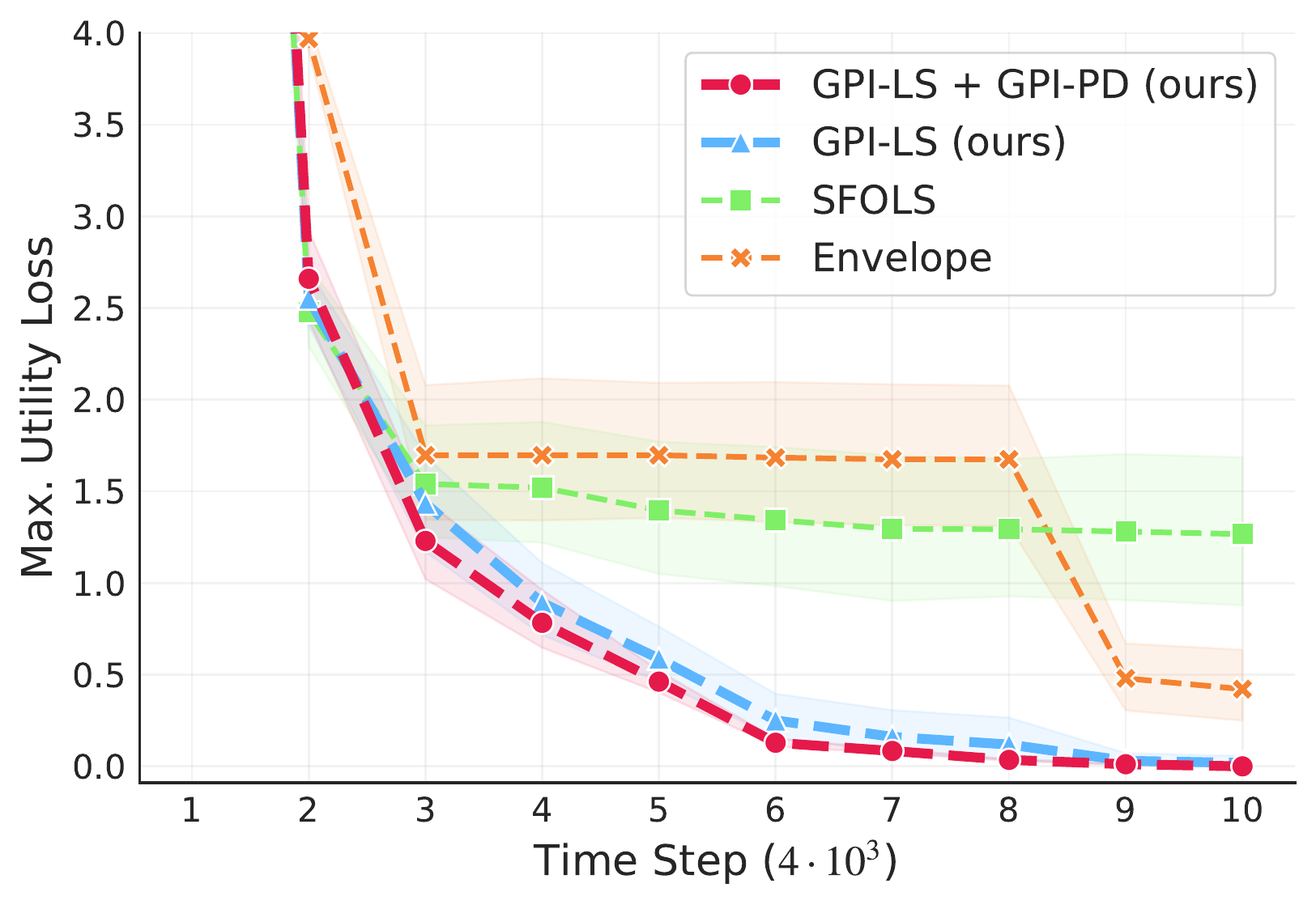}
    \includegraphics[width=0.247\linewidth]{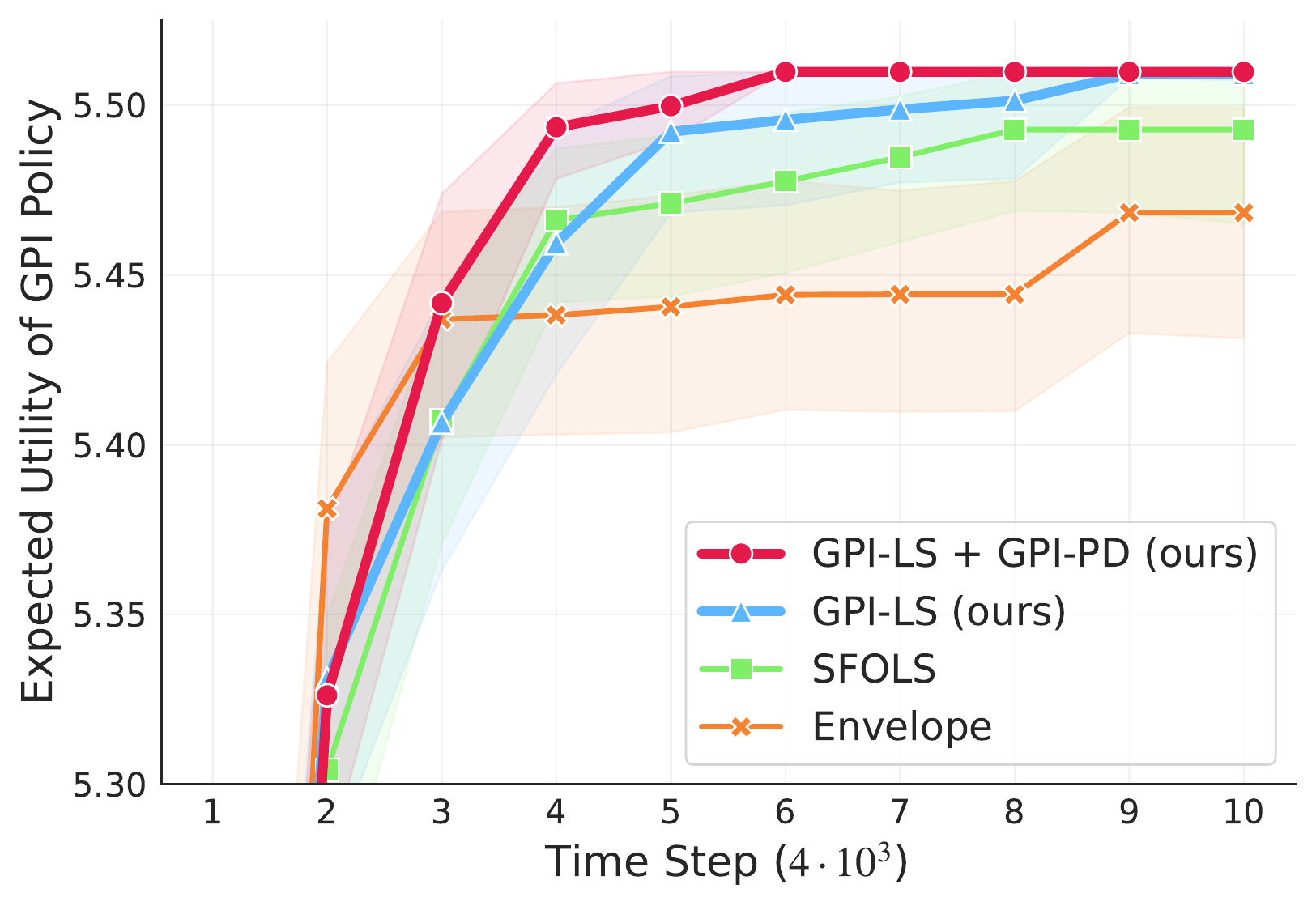}
    \includegraphics[width=0.247\linewidth]{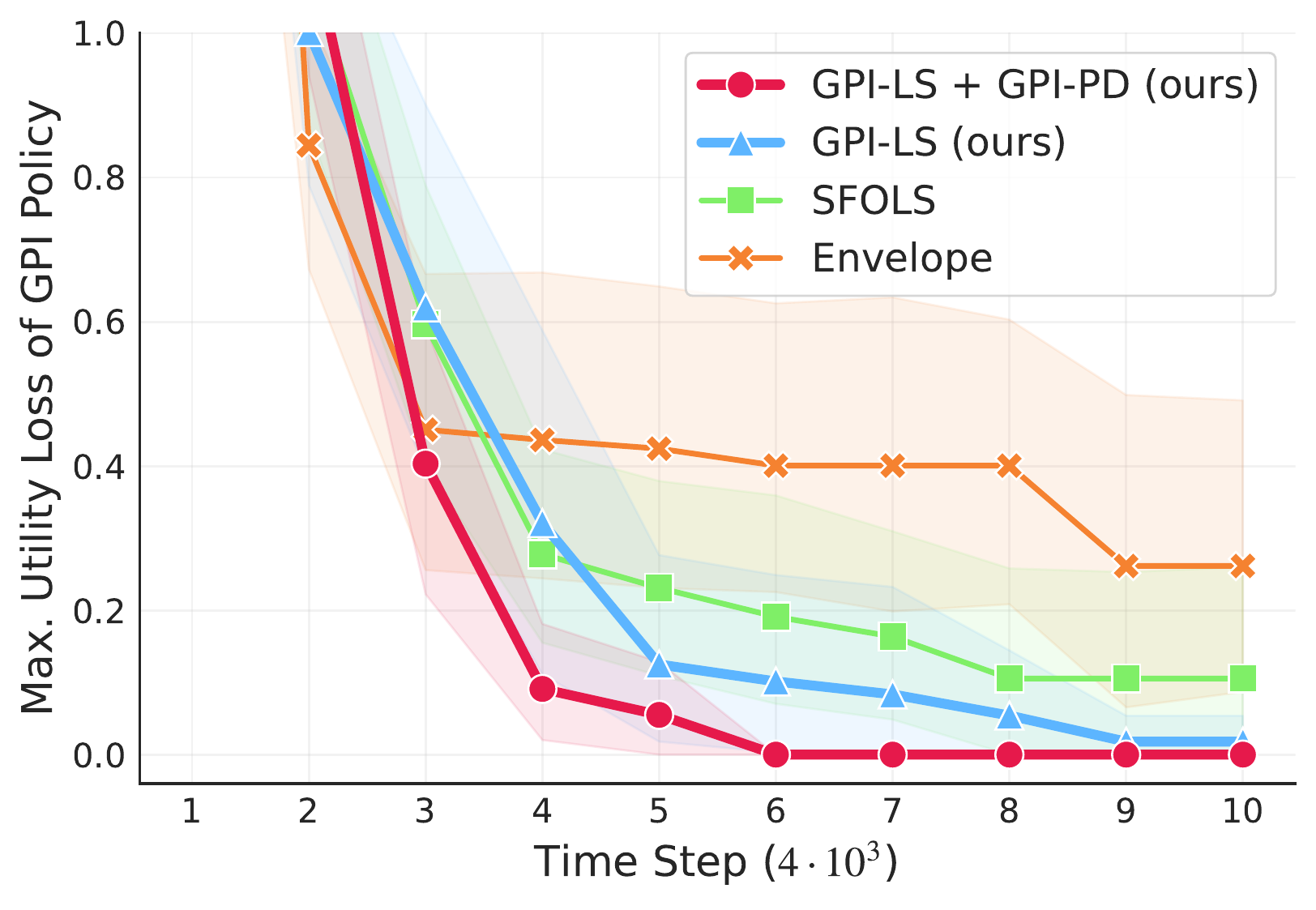}
    \caption{[Deep Sea Treasure Domain] Expected Utility and Maximum Utility Loss of each algorithm. Leftmost plots: performance of the best known policy. Rightmost plots: performance when following the GPI policy with respect to the known policies.}
    \Description{}
    \label{fig:dst}
\end{figure*}

We evaluate the performance of our algorithms in challenging multi-objective tasks---both with discrete and continuous state and action spaces---and compare it with state-of-the-art MORL algorithms. We consider three environments~\cite{Alegre+2022bnaic} with qualitatively different characteristics (see Fig.~\ref{fig:envs}). We follow the performance evaluation methodology proposed in~\cite{Zintgraf+2015,Hayes+2022}, which focuses on \textit{utility-based} metrics. These are relevant since they more directly reflect an algorithm's performance with respect to a user's preferences. Concretely, when evaluating a given algorithm, we compute the \textit{expected utility} of its solutions: $\mathrm{EU}(\Pi) = \Ex_{\w\sim\Wspace}[\max_{\pi\in\Pi} v^{\pi}_{\w}]$.\footnote{We approximate the expectation $\Ex_{\w\in\Wspace}[\cdot]$ by averaging over 100 equidistant weight vectors covering the simplex $\Wspace$.}
We also quantify the \textit{maximum utility loss} of an algorithm: $\mathrm{MUL}(\Pi) = \max_{\w\in\Wspace} (v^{*}_{\w} - \max_{\pi\in\Pi} v^{\pi}_{\w})$.\footnote{When following the GPI policy, EU and MUL can be written as $\mathrm{EU}(\Pi) {=} \Ex_{\w\sim\Wspace}[ v^{\gpi}_{\w}]$ and $\mathrm{MUL}(\Pi) {=} \max_{\w\in\Wspace} (v^{*}_{\w} - v^{\gpi}_{\w})$, respectively.}
We report the mean and its $95\%$ confidence interval over multiple random seeds.\footnote{All algorithm's hyperparameters are presented in the Appendix.}

\vspace{0.1cm}
\noindent\textbf{Deep Sea Treasure.}
This is a classic MORL domain in which a submarine has to trade-off between collecting treasures and minimizing a time penalty  (Fig.~\ref{fig:envs}a).
When evaluating our method in this setting, we employed a simplified, tabular version of GPI-PD (see Appendix). Recall that our GPI-LS algorithm (Alg.~\ref{alg:algo1}), unlike GPI-PD (Alg.~\ref{alg:gpi-pd}), is model-free, does not perform Dyna planning steps, and uses a standard, unprioritized experience replay buffer with uniform sampling.
We compare GPI-PD with two state-of-the-art competitors: Envelope MOQ-Learning~\cite{Yang+2019}, and SFOLS~\cite{Alegre+2022} (which uses OLS~\cite{Roijers2016,Mossalam+2016} for selecting which weight vectors to train on). 

In Fig.~\ref{fig:dst}, we evaluate each algorithm's expected utility (EU) and maximum utility (MUL) both \textit{(i)} if each algorithm---when optimizing a given preference---follows its best known policy (two leftmost plots); and \textit{(ii)} if each algorithm---when optimizing a given preference---follows the GPI policy derived from the known policies (two rightmost plots). 
We consider a setting where agents are constrained and have a limited budget of $4{,}000$ learning time steps per iteration. We now make a few important empirical observations, which hold for both above-mentioned settings (\textit{i} and \textit{ii}):

\noindent$\,\,\bullet$ The performance of the Envelope algorithm (both in terms of EU and MUL) has high variance since it trains on randomly-sampled weight vectors at each iteration. Furthermore, its asymptotic performance is always worse than our algorithms' performances. 

 \noindent$\,\,\bullet$ SFOLS converges to a suboptimal CCS: it consistently adds suboptimal policies (the best ones it could discover under the training time constraints) to its CCS. As a result, SFOLS incorrectly prioritizes 
 preferences.

 \noindent$\,\,\bullet$ GPI-LS, by contrast, 
 is consistently capable of identifying the most promising weight vectors, 
 i.e., those with the highest guaranteed improvement according to Eq.~\eqref{eq:best-w}. This allows it to always rapidly reach near-zero maximum utility loss. These results are consistent with Theorems~\ref{th:guaranteeCCS}~and~\ref{th:guaranteeEpsCCS}, which formally ensure that GPI-LS always converges to optimal solutions and that it monotonically improves the quality of its solutions over time (up to approximation errors due to experimental statistics having been computed based on finite data points). 
When GPI-LS also employs experience prioritization (that is, when combined with GPI-PD), the resulting method produces even higher improvements at each iteration, thereby enhancing overall sample efficiency.

\vspace{0.1cm}
\noindent \textbf{Minecart.}
\begin{figure}[h]
\centering

\begin{subfigure}[h]{\columnwidth}
\centering
\includegraphics[width=0.6\textwidth]{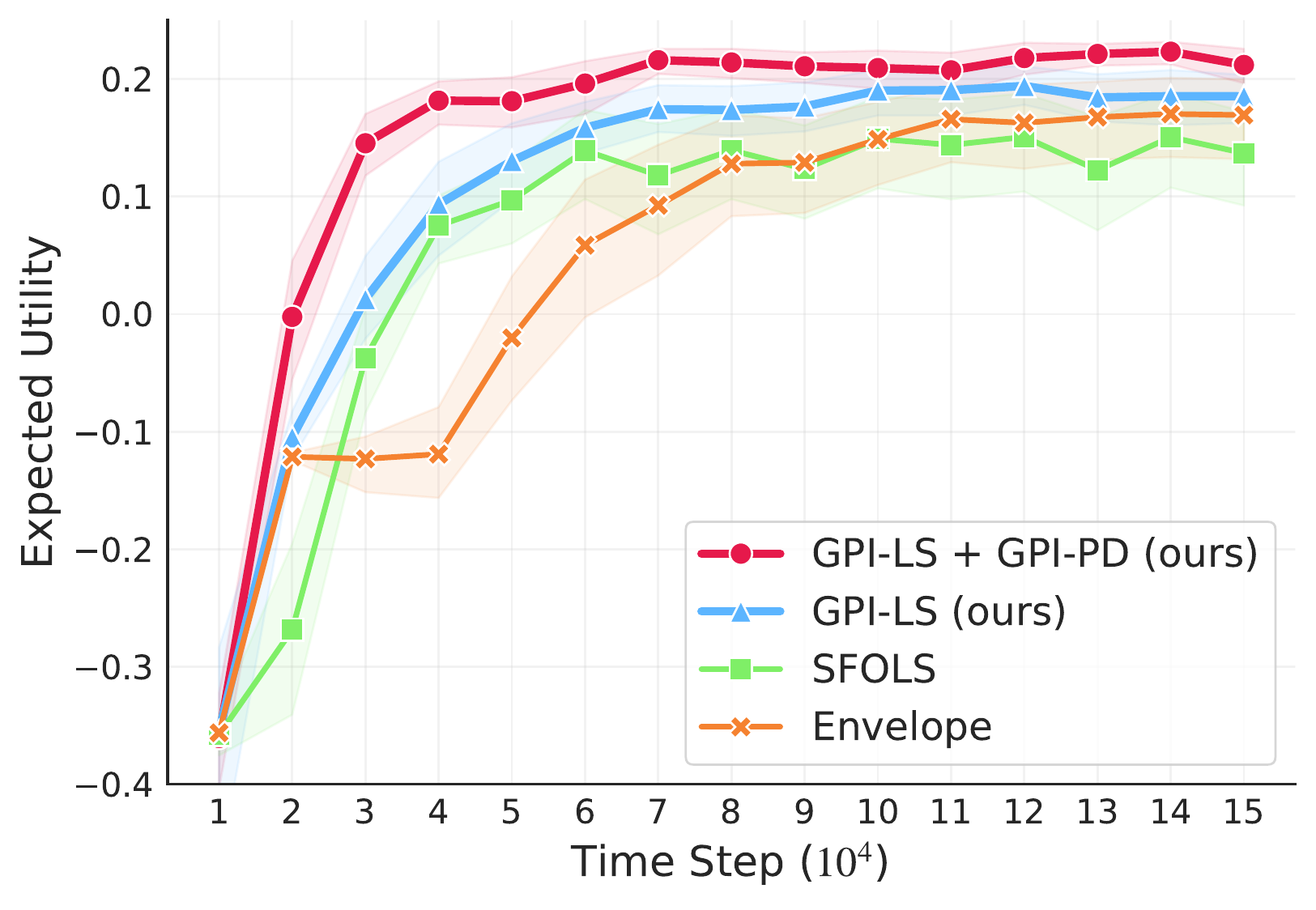}
\Description{Expected Utility over 100 weight vectors from $\Wspace$.}
\caption{Expected Utility over 100 weight vectors from $\Wspace$.}
\label{fig:minecart-utility}
\end{subfigure}

\begin{subfigure}[h]{\columnwidth}
\centering
\includegraphics[width=0.6\textwidth]{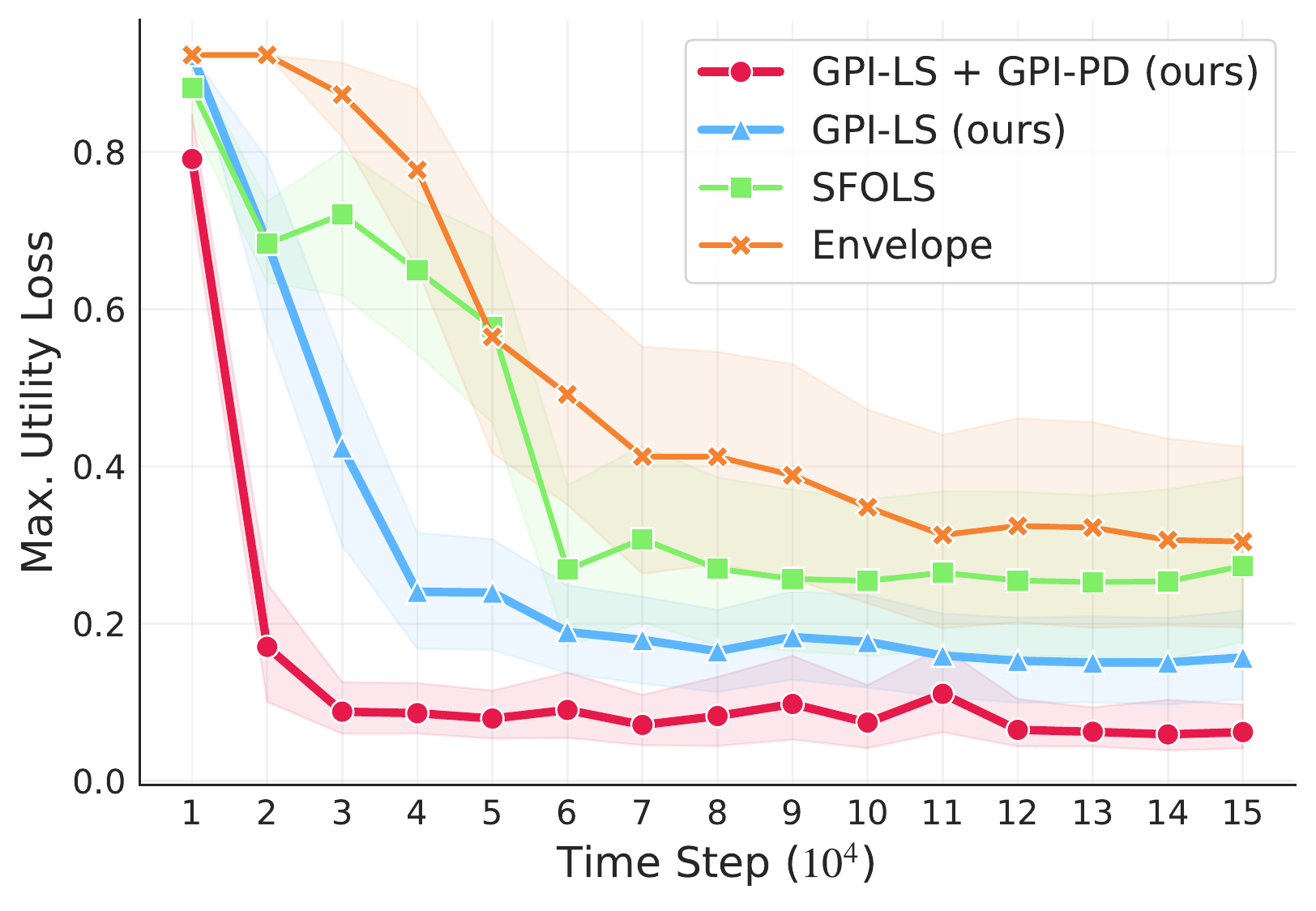}
\Description{Maximum Utility Loss.}
\caption{Maximum Utility Loss.}
\label{fig:minecart-mul}
\end{subfigure}

\Description{Evaluation of the algorithms on the Minecart domain.}
\caption{[Minecart Domain] Performance evaluation.}
\label{fig:minecart}

\end{figure}
This domain involves controlling a cart (Fig.~\ref{fig:envs}b) tasked with reaching ore mines, collecting and selling ores, and minimizing fuel consumption~\cite{Abels+2019}. There are three conflicting objectives, representing the agent's preferences for collecting each of the two types of ores, and for saving fuel. This is a continuous state space problem, and so we can directly deploy GPI-PD (Alg.~\ref{alg:gpi-pd}); GPI-PD uses GPI both to prioritize weight vectors and experiences (Section~\ref{sec:model}).
When comparing it with SFOLS and Envelope, we ensure that such competitors use the same neural network architecture and hyperparameters as GPI-PD, to allow for a fair comparison.

In Fig.~\ref{fig:minecart}, the performance of all algorithms follows a qualitatively similar behavior as observed in the Deep Sea Treasure experiment. Our methods (GPI-LS, and GPI-LS combined with GPI-PD) consistently identify optimal solutions, reach near-zero maximum utility loss, and achieve performance metrics that strictly dominate that of competitors. The Envelope algorithm is the least sample-efficient method during the first ten iterations---although it slightly surpasses the expected utility of SFOLS at the end of the learning process. These results further emphasize the importance of our weight prioritization technique (Section~\ref{sec:gpi-ls}), which allows GPI-LS to allocate its training time to preferences whose performances are guaranteed to be improvable.
As a result, GPI-LS outperforms the competing algorithms in terms of sample efficiency, and always converges to solutions with higher EU and MUL. Similarly, GPI-PD consistently reaches near-zero maximum utility loss.

\vspace{0.1cm}
\noindent \textbf{MO-Hopper.} 
\begin{figure}[h]
\centering

\begin{subfigure}[h]{\columnwidth}
\centering
\includegraphics[width=0.6\textwidth]{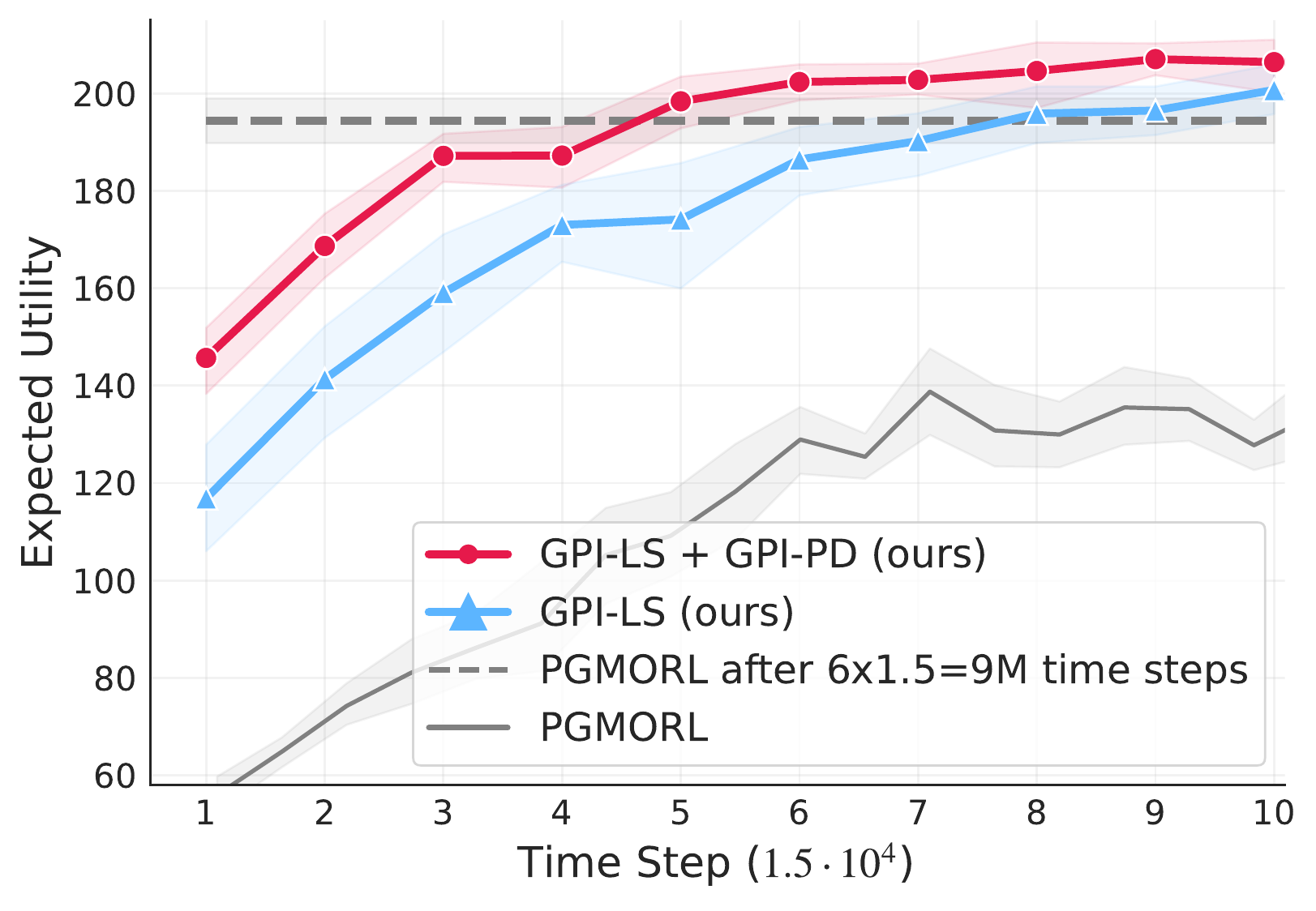}
\Description{.}
\caption{Expected Utility over 100 weight vectors from $\Wspace$.}
\label{fig:hopper-utility}
\end{subfigure}

\begin{subfigure}[h]{\columnwidth}
\centering
\includegraphics[width=0.6\textwidth]{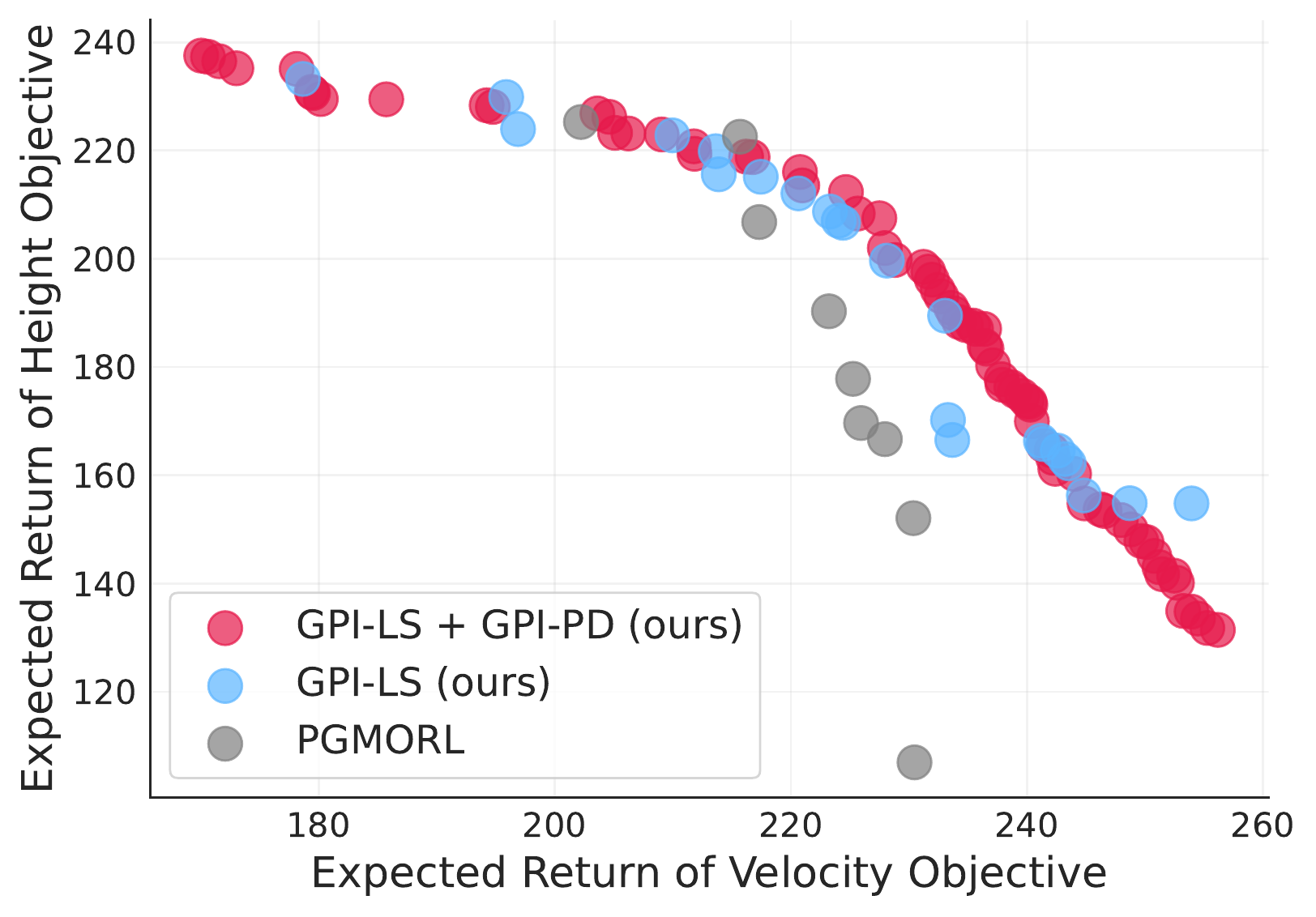}
\Description{Pareto frontier encountered by each method.}
\caption{Pareto frontier produced by each method.}
\label{fig:hopper-frontier}
\end{subfigure}

\Description{Evaluation of the algorithms on the MO-Hopper domain.}
\caption{[MO-Hopper]: Performance evaluation.}
\label{fig:hopper}

\end{figure}
This domain involves controlling a one-legged agent tasked with learning to hop while balancing two objectives: maximizing forward speed and jumping height. Unlike the previously-discussed domains, MO-Hopper operates over a continuous-action space. Neither Envelope nor SFOLS can tackle such a setting. For this reason, we compared our algorithms with a state-of-the-art MORL technique capable of dealing with continuous actions: PGMORL~\cite{Xu+2020icml}.
This is an evolutionary algorithm that assigns weight vectors to a population of agents (six, in our experiments, as in \cite{Xu+2020icml}). Agents are trained in parallel using PPO~\cite{Schulman+2017}. We extended our methods---both GPI-PD and GPI-LS---to deal with continuous actions by adapting the TD3 algorithm~\cite{Fujimoto+2018} to the MORL setting (see the Appendix).
In Fig.~\ref{fig:hopper}, we compare the expected utility and Pareto frontiers generated by GPI-PD and PGMORL. Here, we analyze Pareto frontiers because we cannot compute the maximum utility loss exactly, since the optimal CCS is unknown. Pareto frontiers were computed by selecting all Pareto-efficient policies among those collected during 20 runs of the algorithms, and evaluating each such policy 5 times over 100 weight vectors.
Once again, GPI-LS and GPI-PD have superior performance both in terms of expected utility and quality of the Pareto frontier. This is the case even though we allowed each of the six PGMORL agents to collect $1.5$ million experiences: they could interact with their environment \textit{ten times more often} than our method/agent. Even then, GPI-PD (with or without using a learned model) consistently achieved higher expected utility (Fig.~\ref{fig:hopper-utility}) during learning, and converged to a final solution with superior performance. Fig.~\ref{fig:hopper-frontier} depicts how the Pareto frontier identified by GPI-LS covers a broader range of behaviors that trade-off between the two objectives. This is made clear by observing that the points in the blue curve of Fig.~\ref{fig:hopper-frontier} dominate the corresponding Pareto frontier of PGMORL. Similarly, GPI-LS, when combined with GPI-PD, also produces a Pareto frontier that dominates that of PGMORL with the exception of one value vector. Notice that because GPI-LS+GPI-PD prioritize \textit{both} weight and experience selection, the resulting frontier is denser---it more thoroughly covers the space of all possible trade-offs between objectives.

\balance

\section{Related Work}

In this section, we briefly discuss some of the most relevant related work. For a complete 
discussion, see the Appendix. 

\noindent \textbf{Model-Free MORL.}
Several \textit{model-free} MORL algorithms have been proposed in the literature ~\cite{vanMoffaert&Nowe2014,Parisi+2017,Abdolmaleki+2020,Hayes+2022}. Here, we discuss the ones that are most related to our work.
In \cite{Abels+2019}, Abels et al. proposed training neural networks capable of predicting multi-objective action-value functions when conditioned on state and weight vector. Yang et al.~\cite{Yang+2019} extended the previously-mentioned approach to increase sample efficiency by introducing a novel operator for updating the action-value neural network parameters.
Our method differs with respect to these in that \textit{(i)} we introduced a novel, principled prioritization scheme to actively select weights on which the action-value network should be trained, in order to more rapidly learn a CCS (Section~\ref{sec:gpi-ls}); and \textit{(ii)} we introduced a principled prioritization method for sampling previous experiences for use in Dyna planning to accelerate learning optimal policies for particular agent preferences. The methods above, by contrast, use uniform sampling strategies that implicitly assume that all training data is equally important to rapidly identify optimal solutions.
Alegre et al.~\cite{Alegre+2022} showed how to use GPI in MORL settings, specifically in case the OLS algorithm is used to learn a CCS.
Importantly, however, the heuristic used by OLS to select which weight vectors to train is based on upper bounds that are frequently exceedingly optimistic and loose. This means that OLS often focuses its training efforts on optimizing policies that do not necessarily improve the maximum utility loss incurred by the resulting CCS. Our method, by contrast, uses a technique for selecting  which preferences to train on via a mathematically principled technique for determining a lower bound on performance improvements that are formally guaranteed to be achievable. This results in a novel active-learning approach that more accurately infers the ways by which a CCS can be rapidly improved.
Xu et al.~\cite{Xu+2020icml} proposed an evolutionary algorithm that trains a population of agents specialized in different weight vectors. We used a single neural network conditioned on the weight vector to simultaneously model \textit{all} policies in the CCS. Fig.~\ref{fig:hopper-utility} shows that our approach surpasses the performance of this algorithm, even when given ten times fewer environment interactions.

\noindent \textbf{Model-Based MORL.}
Compared to model-free MORL algorithms, model-based MORL methods have been relatively less explored. Wiering et al.~\cite{Wiering+2014} introduced a tabular multi-objective dynamic programming algorithm. %
Yamaguchi et al.~\cite{Yamaguchi+2019} proposed an algorithm that learns models for predicting expected multi-objective reward vectors, rather than multi-objective Q-functions. Importantly, they tackle the average reward setting, rather than the cumulative discounted return setting. \cite{Yamaguchi+2019} is limited to discrete state spaces and focuses its experiments on small MDPs with at most ten states. Similarly, \cite{Wiering+2014} assumes both discrete and deterministic MOMDPs.
Similarly, the model-based method 
in \cite{Agarwal+2022} can tackle tabular, discrete 
settings---even though it supports non-linear utility functions. 
The methods proposed in \cite{Wang&Sebag2012,Perez+2015,Painter+2020,Hayes+2021aamas} investigate multi-objective Monte Carlo tree search approaches, but unlike our algorithm, require prior access to known transition models of the MOMDP. 

\section{Conclusions}

In this paper we introduced two principled prioritization methods that significantly improve sample efficiency, as well as the first model-based MORL algorithm capable of dealing with continuous state spaces.
Both prioritization schemes we introduced were derived from properties of GPI and resulted in an effective, sample-efficient algorithm with important theoretical guarantees. In particular, we \textit{(i)} proved strong convergence guarantees to an optimal solution in a finite number of steps, or to an $\epsilon$-optimal solution (for a bounded $\epsilon$) if the agent is limited and can only identify possibly sub-optimal policies; \textit{(ii)} proved that our method is an anytime algorithm capable of monotonically improving the quality of the solution throughout the learning process; and \textit{(iii)} formally defined a bound that characterizes the maximum utility loss incurred by partial solutions computed by our technique at any given iteration. Finally, we empirically showed that our algorithm outperforms state-of-the-art MORL algorithms in qualitatively different multi-objective problems, both with discrete and continuous state and action spaces.
In future work, we would like to combine our algorithmic contributions with other types of model-based approaches, such as predecessor/backward models~\cite{Chelu+2020} and value equivalent models~\cite{Grimm+2021}. We are also interested in extending our method to settings with non-linear utility functions.





\begin{acks}
This study was financed in part by the following Brazilian agencies: Coordenação de Aperfeiçoamento de Pessoal de Nível Superior - Brazil (CAPES) - Finance Code 001; CNPq (grants 140500/2021-9 and 304932/2021-3); and FAPESP/MCTI/CGI (grant 2020/05165-1).
This research was partially supported by funding from the Flemish Government under the ``Onderzoeksprogramma Artifici\"{e}le Intelligentie (AI) Vlaanderen'' program and the Research Foundation Flanders (FWO) [G062819N].
\end{acks}



\bibliographystyle{ACM-Reference-Format} 
\bibliography{AL,MZ,OURS,stringDefs}


\end{document}



\pagestyle{fancy}
\fancyhead{}


\maketitle

\onecolumn
\begin{center}
    \huge
    \textbf{Appendix: Sample-Efficient Multi-Objective Learning via\\ Generalized Policy Improvement Prioritization}
        
    \vspace{0.4cm}
        
\end{center}

\section{Proof of Theorem 3.7}

\paragraph{Theorem 3.7}
Let $\Pi$ be an arbitrary set of policies, and let $\pi_{\w}$ in $\Pi$ be a deterministic policy tasked with optimizing some $\w\in\Wspace$. Then, $q^{\gpi}_{\w}(s,a) = q^{\pi}_{\w}(s,a)$ \textbf{\textit{for all}}
state-action pairs in $\Sspace {\times} \Aspace$ if and only if  $q^*_{\w}(s,a) = q^{\pi}_{\w}(s,a)$. In other words, $\pi_{\w}$ is guaranteed to be an optimal policy for $\w$ iff the GPI policy computed over $\Pi$, for optimizing $\w$, cannot improve the $q$-function of $\pi_{\w}$ for any state-action pairs.
\begin{proof}
We will prove both directions of the equivalence separately. First, we have to show that:
\begin{eqnarray}
  q^{\gpi}_{\w}(s,a) = q^{\pi}_{\w}(s,a), \ \forall (s,a)\in\Sspace{\times}\Aspace \implies q^*_{\w}(s,a) = q^{\pi}_{\w}(s,a), \ \forall (s,a)\in\Sspace{\times}\Aspace.
\end{eqnarray}

First, notice that if $q^{\gpi}_{\w}(s,a) = q^{\pi}_{\w}(s,a)$, then:
\begin{align}
    q^{\pi}_{\w}(s,a) &= q^{\gpi}_{\w}(s,a) \\
                      \label{eq:bla2}
                      &\geq \max_{\pi'\in\Pi} q^{\pi'}_{\w}(s,a). \ \text{(due to Theorem 1 of Barreto et al.~\cite{Barreto+2017})}.
\end{align}
This implies that:
\begin{align}
    \pi^\gpi(s;\w) &\in \argmax_{a\in\Aspace}\max_{\pi'\in\Pi} q^{\pi'}_{\w}(s,a) \\
    \label{eq:bla1}
                  &\in \argmax_{a\in\Aspace} q^{\pi}_{\w}(s,a) \ \text{(because of Eq.~\eqref{eq:bla2} and since $\pi\,\in\,\Pi$)}.
\end{align}
We can then show that:
\begin{align}
    q^{\gpi}_{\w}(s,a) &= \Ex[\vect{R}_t \cdot \w + \gamma q^{\gpi}_{\w}(S_{t+1}, \pigpi(S_{t+1};\w)) \ | \ S_t = s, A_t = a ]\\
     &= \Ex[\vect{R}_t \cdot \w + \gamma q^{\pi}_{\w}(S_{t+1}, \pigpi(S_{t+1};\w)) \ | \ S_t = s, A_t = a ]\\
     &= \Ex[\vect{R}_t \cdot \w + \gamma q^{\pi}_{\w}(S_{t+1}, \argmax_{a'\in\Aspace} q^{\pi}_{\w}(s,a')) \ | \ S_t = s, A_t = a ] \ \text{(due to the result shown in Eq.~\eqref{eq:bla1})}\\
     &= \Ex[\vect{R}_t \cdot \w + \gamma \max_{a'\in\Aspace} q^{\pi}_{\w}(S_{t+1}, a') \ | \ S_t = s, A_t = a ].
\end{align}
Because $q^{\gpi}_{\w}(s,a) = q^{\pi}_{\w}(s,a)$, then:
\begin{align}
    \label{eq:bellman-optimal}
    q^{\pi}_{\w}(s,a) &= \Ex[\vect{R}_t \cdot \w + \gamma \max_{a'\in\Aspace} q^{\pi}_{\w}(S_{t+1}, a') \ | \ S_t = s, A_t = a ].
\end{align}
Notice that Eq.~\eqref{eq:bellman-optimal} is the Bellman optimality equation for action-value functions.
Hence, we can conclude that $q^{\pi}_{\w}(s,a) = q^*_{\w}(s,a)$.

For the opposite direction of the proof, we have to show that:
\begin{eqnarray}
    \label{eq:direction2_thm3.7}
    q^*_{\w}(s,a) = q^{\pi}_{\w}(s,a), \ \forall (s,a)\in\Sspace{\times}\Aspace \implies  q^{\gpi}_{\w}(s,a) = q^{\pi}_{\w}(s,a), \ \forall (s,a)\in\Sspace{\times}\Aspace.
\end{eqnarray}
This can be shown by the following steps:
\begin{align}
   q^{\gpi}_{\w}(s,a) &\geq \max_{\pi'\in\Pi} q^{\pi'}_{\w}(s,a) \ \text{(Theorem 1 of Barreto et al.~\cite{Barreto+2017})} \\
                      &\geq q^{\pi}_{\w}(s,a)  \ (\text{since } \pi\in\Pi) \\
                      &= q^*_{\w}(s,a) \ (\text{since the antecedent (LHS) of Eq.~\eqref{eq:direction2_thm3.7} states that } q^{\gpi}_{\w}(s,a) = q^{*}_{\w}(s,a)).
\end{align}
From the definition of the optimal action-value it is not possible that $q^{\gpi}_{\w}(s,a) > q^*_{\w}(s,a)$. Thus, it follows that $q^{\gpi}_{\w}(s,a) = q^*_{\w}(s,a)$.
\end{proof}

\section{Experiments Details}

\subsection{Environments}

All the environments used in the experiments are available in the MO-Gym library~\cite{Alegre+2022bnaic}.

\paragraph{Deep Sea Treasure}
The Deep Sea Treasure is a classic MORL environment \cite{Vamplew+2011,vanMoffaert&Nowe2014,Abels+2019,Yang+2019}.
The agent's state at a given time step $t$ is its coordinates in the grid, $S_t = [x,y]$.
The action space consists of four directions the agent can move to, $\mathcal{A} = \{ \text{up},\text{down},\text{left},\text{right}\}$.
The first component of the reward vector $\vect{r}(s,a,s') \in \mathbb{R}^2$ is the treasure value\footnote{We adopted the treasures values as defined in \cite{Yang+2019}.} (or zero if the agent is in a blank cell), and the second component is a time penalty of $-1$ in all states. 
The cells with treasures are also terminal states.
We considered a discount factor of $\gamma = 0.99$ in this domain.
There are ten different optimal values in this domain's CCS, each corresponding to a policy that reaches one of the ten treasures on the map.

\paragraph{Minecart}
In this domain, introduced in \cite{Abels+2019}, the agent controls a cart that starts in a location in the upper-left corner of the map, and must go to one of the existing ore mines and collect two different types of ores. Then, the cart must return to the base to sell the ores.
The state space $\Sspace\in\mathbb{R}^{7}$ consists of the cart's 2D coordinates, its speed, the sine and cosine of the angle of the cart orientation, and the current amount of collected ore, normalized by the maximum capacity of ore that the cart can store.
The action space contains the following six actions: $\Aspace = \{\mathrm{mine, left, right, accelerate, brake, none}\}$.
The first two components of the reward function $\vect{r}(s,a,s') \in \mathbb{R}^3$ are the amount of ore sold for each of the two types of ores (notice that this is a sparse reward, given to the agent only when it returns to the base). The last component is the amount of fuel used when the agent selects the actions mine or accelerate, plus a constant fuel cost for all time steps.
We used $\gamma = 0.98$ in this domain, as in \cite{Abels+2019}.

\paragraph{MO-Hopper}
This environment extends the \enquote{Hopper-v4} environment of the Gym library~\cite{Brockman+2016}. 
The state space $\Sspace \in \mathbb{R}^{11}$ consists of positional values of different body parts of the hopper, followed by the velocities of those individual parts.
The action space $\Aspace\in\mathbb{R}^3$ represents the torques applied to each of the three joints of the robot.
The multi-objective reward function $\vect{r}(s,a,s') \in \mathbb{R}^2$ is defined similarly as in \cite{Xu+2020icml}:
\begin{align}
     r_1(s,a,s') &= v_x + C, \\
    r_2(s,a,s') &= 10 (h - h_{\mathrm{init}}) + C,
\end{align}
where $v_x$ is the agent's horizontal velocity, $h$ and $h_{\mathrm{init}}$ are the agent's current height and initial height, respectively, and $C = 1 - \sum_{i} a_{i}^2$ is a term containing a bonus for the agent's being alive as well as an energy cost.
We considered $\gamma=0.99$.

\subsection{Experiments Details}

The code containing the proposed methods' implementation and the necessary scripts to reproduce our results are available at MORL-Baselines library: \url{https://github.com/LucasAlegre/morl-baselines}.

\subsubsection{Weight Vectors}
To compute the corner weights ($\mathrm{CornerWeights}(\Vset)$ in GPI-PD) we used pycddlib (\url{https://github.com/mcmtroffaes/pycddlib/}) to efficiently enumerate the vertices of the polyhedron $P$ (Definition 3.1 in the paper).
To generate equidistant weight vectors covering the simplex $\Wspace$, we used the Riesz s-Energy method~\cite{Blank+2021} implemented in pymoo~\cite{Blank&Deb2020}.

\subsubsection{Neural Network Architectures}
To further increase the sample efficiency of our algorithm, we employed a recently-proposed technique of using Dropout and Layer Normalization \cite{Hiraoka+2022} in the neural networks we used to approximate the action-value function $\vect{Q}_{\theta}(s,a,\w)$.
This allows us to use a higher update-to-data (UTD) ratio, that is, a higher number of gradient updates (we used $G = 20$) per each time step in the real environment.
In the Minecart domain, each neural network was modeled as a multilayer perceptron (MLP) consisting of 4 layers with 256 neurons. In the MO-Hopper domain, we used 2 layers with 256 neurons for both the critic and actor networks.
We used the same neural network architectures both in our method and when implementing competitor algorithms (Envelope~\cite{Yang+2019} and SFOLS~\cite{Alegre+2022}), in order to make the experimental comparisons fair.
%
We used Adam \cite{Kingma&Ba2015} (with learning rate $3\cdot10^{-4}$ and mini-batches of size 256) as the first-order gradient optimizer used to train all neural networks that were used in our algorithms and competing methods.

\subsubsection{Dyna and Learned Models}

We used an ensemble of $n=5$ probabilistic neural networks in both the Minecart and MO-Hopper environments to learn the dynamics model $\model_{\varphi}(S_{t+1},\vect{R_t}|S_t,A_t)$.
For the Minecart domain, each network was an MLP with 3 layers with 256 neurons each, and for the MO-Hopper domain, we used an MLP with 4 layers with 200 neurons each.
The probabilistic neural networks used for the dynamics model were trained with early stopping using a holdout validation subset of the experiences in the buffer $\buffer$, as is commonly done when training these models~\cite{Chua+2018,Janner+2019}.

In the Deep Sea Treasure, we used $H=5$ Dyna steps for each step in the real environment. Notice that all algorithms used Dyna with uniform sampling and the same number of updates. GPI-PD's only difference from GPI-LS is that it uses our proposed GPI-based prioritization for sampling $H$ states based on which it performs Dyna updates.
Notice that for the tabular case, there are no mini-batches and we use a slightly different version of GPI-PD designed for the tabular case (see Section~\ref{sec:tabular} of this Appendix). 
For the Minecart domain, the model $\model_{\varphi}$ was updated every 250 time steps and then used to generate 25,000 simulated experiences. For the MO-Hopper domain, the model was updated every 250 time steps and then used to generate 50,000 simulated experiences.
This corresponds to $H=25000/250 = 100$ and $H=50000/250 = 200$, respectively.
To construct each mini-batch (line 21 of Algorithm 2 in the paper), we used a ratio of simulated-to-real experiences of $\beta=0.5$ for Minecart, and $\beta=0.9$ for MO-Hopper.

To sample from a replay buffer based on the priorities computed by our proposed technique, GPI-PD, we employed the commonly-used sum tree data structure~\cite{Schaul+2016}. We follow Fujimoto et al.~\cite{Fujimoto+2020} and computed the likelihood of sampling the $i$-th experience in the buffer $\buffer$ as follows:
\begin{equation}
    P_{\w}(i) = \frac{\max (|\delta_i|^\alpha, \kappa)}{\sum_j \max (|\delta_j|^\alpha, \kappa)},
\end{equation}
where $\delta_i$ is the priority of the $i$-th experience, $\alpha$ is a parameter to smooth out extreme values, and $\kappa$ is a minimum allowed priority. 
We used $\alpha = 0.6$ in all domains, $\kappa = 0.001$ for the Deep Sea Treasure domain, $\kappa = 0.01$ for the Minecart domain, and $\kappa = 0.1$ for the MO-Hopper domain.

\subsubsection{Exploration} We used $\epsilon$-greedy exploration in the Deep Sea Treasure and Minecart domains. In Deep Sea Treasure, we linearly annealed $\epsilon$ from $1$ to $0$ during the first 50,000 time steps. In the Minecart domain, we linearly annealed $\epsilon$ from $1$ to $0.05$ during the first 50,000 time steps.
In the MO-Hopper domain, we followed the original TD3 algorithm and added a zero-mean Gaussian noise with standard deviation of $0.02$ to the actions outputted by the actor network.

\subsection{Tabular GPI-PD}
\label{sec:tabular}

In Algorithm~\ref{alg:tabular-gpi-pd}, below, we introduce a tabular version of the GPI-PD algorithm presented in the paper. This algorithm was used in the Deep Sea Treasure experiments. Here, we used a learning rate of $\alpha=0.3$.

\begin{algorithm}[h!]
\caption{Tabular \textbf{GPI} \textbf{P}rioritized \textbf{D}yna (\textbf{GPI-PD})}
\label{alg:tabular-gpi-pd}

\DontPrintSemicolon
\SetKwInOut{Input}{Input}
\SetKwProg{everyn}{}{ do}{}

\Input{MOMDP M, timesteps per iteration $N$, learning rate $\alpha$}

Initialize tabular model $\model$, buffer $\buffer$\;
$\Vset \leftarrow \{\}, \Pi \leftarrow \{\}$\;
$\w \leftarrow [1,0,...,0]$\;
\For{$t = 0 ... \infty$}{
\everyn(\Comment*[f]{GPI Linear Support}){Every $N$ time steps}{
    Add $\vect{v}^{\pi_{\w}}$ to $\Vset$\;
    $\Pi, \Vset \leftarrow \mathrm{RemoveDominated}(\Pi,\Vset)$\;
    $\Wspace_{\mathrm{corner}} \leftarrow \mathrm{CornerWeights(\Vset)}$\;
    $\w \leftarrow \argmax_{\w\in\Wspace_{\mathrm{corner}}}
   (v^{\gpi}_{\w} - \max_{\pi\in\Pi} v^{\pi}_{\w})$\;
    Initialize new policy $\pi_{\w}$ with $\vect{Q}^{\pi_{\w}}(s,a) \leftarrow \vect{Q}^{\pi'}(s,a), \forall (s,a)\in\Sspace{\times}\Aspace$ , where $\pi' = \argmax_{\pi\in\Pi} \vect{v}^{\pi} \cdot \w$\;
    Add $\pi_{\w}$ to $\Pi$\;
}
\If{$S_t$ is terminal}{
    $S_t \leftarrow$ sample state from initial state distribution $\mu$\;
}

$A_t \leftarrow \pigpi(S_t; \w)$  \ \Comment*[r]{Follow GPI policy} 
Execute $A_t$, observe $S_{t+1}$ and $\vect{R}_t$\;

$A' \leftarrow  \pigpi(S_{t+1}; \w)$\;
$\vect{Q}^{\pi_{\w}}(S_t,A_t) \leftarrow \vect{Q}^{\pi_{\w}}(S_t,A_t) + \alpha (\vect{R}_t + \gamma\vect{Q}^{\pi_{\w}}(S_{t+1},A') - \vect{Q}^{\pi_{\w}}(S_t,A_t))$\;

\For{$\pi_{\w'} \in \Pi$}{
$A' \leftarrow  \pigpi(S_{t+1}; \w')$\;
$\vect{Q}^{\pi_{\w'}}(S_t,A_t) \leftarrow \vect{Q}^{\pi_{\w'}}(S_t,A_t) + \alpha (\vect{R}_t + \gamma\vect{Q}^{\pi_{\w'}}(S_{t+1},A') - \vect{Q}^{\pi_{\w'}}(S_t,A_t))$\;
}

Add $(S_t,A_t,\vect{R}_t,S_{t+1})$ to $\buffer$ with priority $P_{\w}(S_t,A_t)$\;

Update the tabular model $\model$ with the experience  $(S_t,A_t,\vect{R}_t,S_{t+1})$\;

\For(\Comment*[f]{GPI-Prioritized Dyna}){$H$ Dyna planning steps}{
    Sample $(S, A) \sim \buffer$ according to $P_{\w}$ \;
    
    $(S', \vect{R}) \sim \model(\cdot|S,A)$\;
    
    $A' \leftarrow  \pigpi(S'; \w)$\;
    $\vect{Q}^{\pi_{\w}}(S,A) \leftarrow \vect{Q}^{\pi_{\w}}(S,A) + \alpha (\vect{R} + \gamma\vect{Q}^{\pi_{\w}}(S',A') - \vect{Q}^{\pi_{\w}}(S,A))$\;
    
    Update priority $P_{\w}(S,A)$\;
    \For{$\pi_{\w'} \in \Pi$}{
    $A' \leftarrow  \pigpi(S'; \w')$\;
    $\vect{Q}^{\pi_{\w'}}(S,A) \leftarrow \vect{Q}^{\pi_{\w'}}(S,A) + \alpha (\vect{R}_t + \gamma\vect{Q}^{\pi_{\w'}}(S',A') - \vect{Q}^{\pi_{\w'}}(S,A))$\;
    }

}

}

\end{algorithm}

\subsection{Continuous-Action GPI-PD}

In order to allow our algorithms to tackle problems with continuous action spaces, we extended TD3~\cite{Fujimoto+2018} to the MORL setting; we call this variant MOTD3.
In particular, we extended TD3 critic networks to receive as input the weight vector $\w$ and output multi-objective values: $\vect{Q}_{\theta}(s,a,\w) : \Sspace {\times} \Aspace {\times} \Wspace \mapsto \mathbb{R}^m$. The policy network, $\pi_{\phi}(s,\w)$, was also conditioned on the weight vector.
The MOTD3 loss function of each of the two critic network, with respect to their corresponding parameters $\{\theta_i\}_{i\in\{1,2\}}$, for a given transition $(s,a,\vect{r},s')$ and weight vector $\w$, is
\begin{equation}
    \mathcal{L}(\theta;\w) =  (\vect{y} - \vect{Q}_{\theta}(s,a,\w))^2,
\end{equation}
where
\begin{equation}
    \vect{y} = \vect{r} + \gamma \vect{Q}_{\theta^{-}_{k}}(s',a',\w), \quad k = \argmin_{i\in\{1,2\}} \vect{Q}_{\theta^{-}_{i}}(s',a',\w) \cdot \w, \quad a' = \pi_{\phi}(s',\w).
\end{equation}
MOTD3's actor network parameters, $\phi$, are then updated following the deterministic policy gradient:
\begin{equation}
    \nabla J(\phi;\w) = \nabla_a \vect{Q}_{\theta}(s,a,\w) \cdot \w |_{a=\pi_{\phi}(s,\w)} \nabla_{\phi}\pi_{\phi}(s,\w). 
\end{equation}

The challenge of extending GPI to the continuous action space is due to the fact that GPI requires computing a maximum over the action space $\Aspace$ in its definition  $\rightarrow$ (see Eq.(4) in the paper). Because this is infeasible in the case $\Aspace$ is infinite, we compute GPI by considering only the finite set of actions returned by the policy network for the weight vectors in the weight support $\Wsupport$. Let $\hat{\Aspace} = \{a \ | \ a = \pi_{\phi}(s,\w) \ \text{for all} \ \w \in \Wsupport\}$ be the aforementioned finite set of actions. The GPI policy is then computed as:
    \begin{eqnarray}
\label{eq:fa-gpi-td3}
   \pigpi(s;\w) \in \argmax_{a\in\hat{\Aspace}}\max_{\w'\in\Wset} \vect{Q}_{\theta}(s,a,\w') \cdot \w .
\end{eqnarray}

\section{Related Work}

In this section, we briefly discuss some of the most relevant related work.

\paragraph{Model-Free MORL}
Several \textit{model-free} MORL algorithms have been proposed in the literature ~\cite{vanMoffaert&Nowe2014,Parisi+2017,Abdolmaleki+2020,Hayes+2022}. Here, we discuss the ones that are most related to our work.
%
In \cite{Abels+2019}, Abels et al. proposed training neural networks capable of predicting multi-objective action-value functions when conditioned on state and weight vector. Yang et al.~\cite{Yang+2019} extended the previously-mentioned approach to increase sample efficiency by introducing a novel operator for updating the action-value neural network parameters.
%
Our method differs with respect to these in that \textit{(i)} we introduced a novel, principled prioritization scheme to actively select weights on which the action-value network should be trained, in order to more rapidly learn a CCS (Section 3.2); and \textit{(ii)} we introduced a principled prioritization method for sampling previous experiences for use in Dyna planning to accelerate learning optimal policies for particular agent preferences. The methods above, by contrast, use uniform sampling strategies that implicitly assume that all training data is equally important to rapidly identify optimal solutions.
%
Alegre et al.~\cite{Alegre+2022} showed how to use GPI in MORL settings, specifically in case the OLS algorithm is used to learn a CCS.
%
Importantly, however, the heuristic used by OLS to select on which weight vectors to train is based on upper bounds that are frequently exceedingly optimistic and loose. This means that OLS often focuses its training efforts on optimizing policies that do not necessarily improve the maximum utility loss incurred by the resulting CCS. Our method, by contrast, uses a technique for selecting  which preferences to train on via a mathematically principled technique for determining a lower bound on performance improvements that are formally guaranteed to be achievable. This results in a novel active-learning approach that more accurately infers the ways by which a CCS can be rapidly improved.
%
Xu et al.~\cite{Xu+2020icml} proposed an evolutionary algorithm that trains a population of agents specialized in different weight vectors. We used a single neural network conditioned on the weight vector to simultaneously model \textit{all} policies in the CCS. Fig. 4a shows that our approach surpasses the performance of this algorithm, even when given ten times fewer environment interactions.

\paragraph{Model-Based MORL}
Compared to model-free MORL algorithms, model-based MORL methods have been relatively less explored. Wiering et al.~\cite{Wiering+2014} introduced a tabular multi-objective dynamic programming algorithm. %
Yamaguchi et al.~\cite{Yamaguchi+2019} proposed an algorithm that learns models for predicting expected multi-objective reward vectors, rather than multi-objective Q-functions. Importantly, they tackle the average reward setting, rather than the cumulative discounted return setting. \cite{Yamaguchi+2019} is limited to discrete state spaces and focuses its experiments on small MDPs with at most ten states. Similarly, \cite{Wiering+2014} assumes both discrete and deterministic MOMDPs.
%
Similarly, the model-based method 
in \cite{Agarwal+2022} can tackle tabular, discrete 
settings---even though it supports non-linear utility functions. 
%
The methods proposed in \cite{Wang&Sebag2012,Perez+2015,Painter+2020,Hayes+2021aamas} investigate multi-objective Monte Carlo tree search approaches, but unlike our algorithm, require prior access to known transition models of the MOMDP. 

\paragraph{Dyna Search-Control}
The case in which the agent performs Dyna without learning a model, and instead using only a experience replay buffer containing old experiences, has been extensively explored~\cite{Lin1992,Schaul+2016,Fujimoto+2020,Sinha+2022}.
Nonetheless, using a learned model has been shown to introduce many advantages and increase the sample-efficiency of deep RL algorithms~\cite{Janner+2019,Abbas+2020,Hafner+2021}. Among them is the capability of querying next states for on-policy actions, whereas experience replay can only replay (likely off-policy) samples previously stored.
%
Janner et al.~\cite{Janner+2019} combines Dyna with deep RL by learning an ensemble of probabilistic neural networks~\cite{Chua+2018}, and proposes using short model rollouts to avoid cumulative model prediction errors.
%
Recently, Pan et al.~\cite{Pan+2022} showed limitations of prioritized experience replay~\cite{Schaul+2016}, and proposed to perform Dyna planning using a technique that generates states with higher TD-error, and simultaneously shows a good coverage of the state space.
%
However, these techniques are for the standard RL setting where a single policy is learned for a single reward function. 
We introduced a technique that takes into account the policies that were already learned for different preferences over multiple reward functions.
%

\paragraph{Universal Successor Features}
Alegre et al.~\cite{Alegre+2022} showed that multi-objective action-value functions are mathematically equivalent to successor features (SFs)~\cite{Barreto+2017}, in case multi-objective reward vectors are interpreted as linear reward features. They showed that learning a CCS over SFs is equivalent to solving the problem of optimal policy transfer in the case in which the tasks are linear combinations of features.
Following the same principles, we can see conditioned neural networks $\vect{Q}_{\theta}(s,a,\w)$ as a universal successor feature approximator~\cite{Borsa+2019}.
Hence, the contributions of this paper may also be employed to increase the sample efficiency of SF-based algorithms.

\bibliographystyle{ACM-Reference-Format} 
\bibliography{AL,MZ,OURS,stringDefs}



\pagestyle{fancy}
\fancyhead{}


\maketitle

\onecolumn
\begin{center}
    \huge
    \textbf{Appendix: Sample-Efficient Multi-Objective Learning via\\ Generalized Policy Improvement Prioritization}
        
    \vspace{0.4cm}
        
\end{center}

\section{Proof of Theorem 3.7}

\paragraph{Theorem 3.7}
Let $\Pi$ be an arbitrary set of policies, and let $\pi_{\w}$ in $\Pi$ be a deterministic policy tasked with optimizing some $\w\in\Wspace$. Then, $q^{\gpi}_{\w}(s,a) = q^{\pi}_{\w}(s,a)$ \textbf{\textit{for all}}
state-action pairs in $\Sspace {\times} \Aspace$ if and only if  $q^*_{\w}(s,a) = q^{\pi}_{\w}(s,a)$. In other words, $\pi_{\w}$ is guaranteed to be an optimal policy for $\w$ iff the GPI policy computed over $\Pi$, for optimizing $\w$, cannot improve the $q$-function of $\pi_{\w}$ for any state-action pairs.
\begin{proof}
We will prove both directions of the equivalence separately. First, we have to show that:
\begin{eqnarray}
  q^{\gpi}_{\w}(s,a) = q^{\pi}_{\w}(s,a), \ \forall (s,a)\in\Sspace{\times}\Aspace \implies q^*_{\w}(s,a) = q^{\pi}_{\w}(s,a), \ \forall (s,a)\in\Sspace{\times}\Aspace.
\end{eqnarray}

First, notice that if $q^{\gpi}_{\w}(s,a) = q^{\pi}_{\w}(s,a)$, then:
\begin{align}
    q^{\pi}_{\w}(s,a) &= q^{\gpi}_{\w}(s,a) \\
                      \label{eq:bla2}
                      &\geq \max_{\pi'\in\Pi} q^{\pi'}_{\w}(s,a). \ \text{(due to Theorem 1 of Barreto et al.~\cite{Barreto+2017})}.
\end{align}
This implies that:
\begin{align}
    \pi^\gpi(s;\w) &\in \argmax_{a\in\Aspace}\max_{\pi'\in\Pi} q^{\pi'}_{\w}(s,a) \\
    \label{eq:bla1}
                  &\in \argmax_{a\in\Aspace} q^{\pi}_{\w}(s,a) \ \text{(because of Eq.~\eqref{eq:bla2} and since $\pi\,\in\,\Pi$)}.
\end{align}
We can then show that:
\begin{align}
    q^{\gpi}_{\w}(s,a) &= \Ex[\vect{R}_t \cdot \w + \gamma q^{\gpi}_{\w}(S_{t+1}, \pigpi(S_{t+1};\w)) \ | \ S_t = s, A_t = a ]\\
     &= \Ex[\vect{R}_t \cdot \w + \gamma q^{\pi}_{\w}(S_{t+1}, \pigpi(S_{t+1};\w)) \ | \ S_t = s, A_t = a ]\\
     &= \Ex[\vect{R}_t \cdot \w + \gamma q^{\pi}_{\w}(S_{t+1}, \argmax_{a'\in\Aspace} q^{\pi}_{\w}(s,a')) \ | \ S_t = s, A_t = a ] \ \text{(due to the result shown in Eq.~\eqref{eq:bla1})}\\
     &= \Ex[\vect{R}_t \cdot \w + \gamma \max_{a'\in\Aspace} q^{\pi}_{\w}(S_{t+1}, a') \ | \ S_t = s, A_t = a ].
\end{align}
Because $q^{\gpi}_{\w}(s,a) = q^{\pi}_{\w}(s,a)$, then:
\begin{align}
    \label{eq:bellman-optimal}
    q^{\pi}_{\w}(s,a) &= \Ex[\vect{R}_t \cdot \w + \gamma \max_{a'\in\Aspace} q^{\pi}_{\w}(S_{t+1}, a') \ | \ S_t = s, A_t = a ].
\end{align}
Notice that Eq.~\eqref{eq:bellman-optimal} is the Bellman optimality equation for action-value functions.
Hence, we can conclude that $q^{\pi}_{\w}(s,a) = q^*_{\w}(s,a)$.

For the opposite direction of the proof, we have to show that:
\begin{eqnarray}
    \label{eq:direction2_thm3.7}
    q^*_{\w}(s,a) = q^{\pi}_{\w}(s,a), \ \forall (s,a)\in\Sspace{\times}\Aspace \implies  q^{\gpi}_{\w}(s,a) = q^{\pi}_{\w}(s,a), \ \forall (s,a)\in\Sspace{\times}\Aspace.
\end{eqnarray}
This can be shown by the following steps:
\begin{align}
   q^{\gpi}_{\w}(s,a) &\geq \max_{\pi'\in\Pi} q^{\pi'}_{\w}(s,a) \ \text{(Theorem 1 of Barreto et al.~\cite{Barreto+2017})} \\
                      &\geq q^{\pi}_{\w}(s,a)  \ (\text{since } \pi\in\Pi) \\
                      &= q^*_{\w}(s,a) \ (\text{since the antecedent (LHS) of Eq.~\eqref{eq:direction2_thm3.7} states that } q^{\gpi}_{\w}(s,a) = q^{*}_{\w}(s,a)).
\end{align}
From the definition of the optimal action-value it is not possible that $q^{\gpi}_{\w}(s,a) > q^*_{\w}(s,a)$. Thus, it follows that $q^{\gpi}_{\w}(s,a) = q^*_{\w}(s,a)$.
\end{proof}

\section{Experiments Details}

\subsection{Environments}

All the environments used in the experiments are available in the MO-Gym library~\cite{Alegre+2022bnaic}.

\paragraph{Deep Sea Treasure}
The Deep Sea Treasure is a classic MORL environment \cite{Vamplew+2011,vanMoffaert&Nowe2014,Abels+2019,Yang+2019}.
The agent's state at a given time step $t$ is its coordinates in the grid, $S_t = [x,y]$.
The action space consists of four directions the agent can move to, $\mathcal{A} = \{ \text{up},\text{down},\text{left},\text{right}\}$.
The first component of the reward vector $\vect{r}(s,a,s') \in \mathbb{R}^2$ is the treasure value\footnote{We adopted the treasures values as defined in \cite{Yang+2019}.} (or zero if the agent is in a blank cell), and the second component is a time penalty of $-1$ in all states. 
The cells with treasures are also terminal states.
We considered a discount factor of $\gamma = 0.99$ in this domain.
There are ten different optimal values in this domain's CCS, each corresponding to a policy that reaches one of the ten treasures on the map.

\paragraph{Minecart}
In this domain, introduced in \cite{Abels+2019}, the agent controls a cart that starts in a location in the upper-left corner of the map, and must go to one of the existing ore mines and collect two different types of ores. Then, the cart must return to the base to sell the ores.
The state space $\Sspace\in\mathbb{R}^{7}$ consists of the cart's 2D coordinates, its speed, the sine and cosine of the angle of the cart orientation, and the current amount of collected ore, normalized by the maximum capacity of ore that the cart can store.
The action space contains the following six actions: $\Aspace = \{\mathrm{mine, left, right, accelerate, brake, none}\}$.
The first two components of the reward function $\vect{r}(s,a,s') \in \mathbb{R}^3$ are the amount of ore sold for each of the two types of ores (notice that this is a sparse reward, given to the agent only when it returns to the base). The last component is the amount of fuel used when the agent selects the actions mine or accelerate, plus a constant fuel cost for all time steps.
We used $\gamma = 0.98$ in this domain, as in \cite{Abels+2019}.

\paragraph{MO-Hopper}
This environment extends the \enquote{Hopper-v4} environment of the Gym library~\cite{Brockman+2016}. 
The state space $\Sspace \in \mathbb{R}^{11}$ consists of positional values of different body parts of the hopper, followed by the velocities of those individual parts.
The action space $\Aspace\in\mathbb{R}^3$ represents the torques applied to each of the three joints of the robot.
The multi-objective reward function $\vect{r}(s,a,s') \in \mathbb{R}^2$ is defined similarly as in \cite{Xu+2020icml}:
\begin{align}
     r_1(s,a,s') &= v_x + C, \\
    r_2(s,a,s') &= 10 (h - h_{\mathrm{init}}) + C,
\end{align}
where $v_x$ is the agent's horizontal velocity, $h$ and $h_{\mathrm{init}}$ are the agent's current height and initial height, respectively, and $C = 1 - \sum_{i} a_{i}^2$ is a term containing a bonus for the agent's being alive as well as an energy cost.
We considered $\gamma=0.99$.

\subsection{Experiments Details}

The code containing the proposed methods' implementation and the necessary scripts to reproduce our results are available at MORL-Baselines library: \url{https://github.com/LucasAlegre/morl-baselines}.

\subsubsection{Weight Vectors}
To compute the corner weights ($\mathrm{CornerWeights}(\Vset)$ in GPI-PD) we used pycddlib (\url{https://github.com/mcmtroffaes/pycddlib/}) to efficiently enumerate the vertices of the polyhedron $P$ (Definition 3.1 in the paper).
To generate equidistant weight vectors covering the simplex $\Wspace$, we used the Riesz s-Energy method~\cite{Blank+2021} implemented in pymoo~\cite{Blank&Deb2020}.

\subsubsection{Neural Network Architectures}
To further increase the sample efficiency of our algorithm, we employed a recently-proposed technique of using Dropout and Layer Normalization \cite{Hiraoka+2022} in the neural networks we used to approximate the action-value function $\vect{Q}_{\theta}(s,a,\w)$.
This allows us to use a higher update-to-data (UTD) ratio, that is, a higher number of gradient updates (we used $G = 20$) per each time step in the real environment.
In the Minecart domain, each neural network was modeled as a multilayer perceptron (MLP) consisting of 4 layers with 256 neurons. In the MO-Hopper domain, we used 2 layers with 256 neurons for both the critic and actor networks.
We used the same neural network architectures both in our method and when implementing competitor algorithms (Envelope~\cite{Yang+2019} and SFOLS~\cite{Alegre+2022}), in order to make the experimental comparisons fair.
%
We used Adam \cite{Kingma&Ba2015} (with learning rate $3\cdot10^{-4}$ and mini-batches of size 256) as the first-order gradient optimizer used to train all neural networks that were used in our algorithms and competing methods.

\subsubsection{Dyna and Learned Models}

We used an ensemble of $n=5$ probabilistic neural networks in both the Minecart and MO-Hopper environments to learn the dynamics model $\model_{\varphi}(S_{t+1},\vect{R_t}|S_t,A_t)$.
For the Minecart domain, each network was an MLP with 3 layers with 256 neurons each, and for the MO-Hopper domain, we used an MLP with 4 layers with 200 neurons each.
The probabilistic neural networks used for the dynamics model were trained with early stopping using a holdout validation subset of the experiences in the buffer $\buffer$, as is commonly done when training these models~\cite{Chua+2018,Janner+2019}.

In the Deep Sea Treasure, we used $H=5$ Dyna steps for each step in the real environment. Notice that all algorithms used Dyna with uniform sampling and the same number of updates. GPI-PD's only difference from GPI-LS is that it uses our proposed GPI-based prioritization for sampling $H$ states based on which it performs Dyna updates.
Notice that for the tabular case, there are no mini-batches and we use a slightly different version of GPI-PD designed for the tabular case (see Section~\ref{sec:tabular} of this Appendix). 
For the Minecart domain, the model $\model_{\varphi}$ was updated every 250 time steps and then used to generate 25,000 simulated experiences. For the MO-Hopper domain, the model was updated every 250 time steps and then used to generate 50,000 simulated experiences.
This corresponds to $H=25000/250 = 100$ and $H=50000/250 = 200$, respectively.
To construct each mini-batch (line 21 of Algorithm 2 in the paper), we used a ratio of simulated-to-real experiences of $\beta=0.5$ for Minecart, and $\beta=0.9$ for MO-Hopper.

To sample from a replay buffer based on the priorities computed by our proposed technique, GPI-PD, we employed the commonly-used sum tree data structure~\cite{Schaul+2016}. We follow Fujimoto et al.~\cite{Fujimoto+2020} and computed the likelihood of sampling the $i$-th experience in the buffer $\buffer$ as follows:
\begin{equation}
    P_{\w}(i) = \frac{\max (|\delta_i|^\alpha, \kappa)}{\sum_j \max (|\delta_j|^\alpha, \kappa)},
\end{equation}
where $\delta_i$ is the priority of the $i$-th experience, $\alpha$ is a parameter to smooth out extreme values, and $\kappa$ is a minimum allowed priority. 
We used $\alpha = 0.6$ in all domains, $\kappa = 0.001$ for the Deep Sea Treasure domain, $\kappa = 0.01$ for the Minecart domain, and $\kappa = 0.1$ for the MO-Hopper domain.

\subsubsection{Exploration} We used $\epsilon$-greedy exploration in the Deep Sea Treasure and Minecart domains. In Deep Sea Treasure, we linearly annealed $\epsilon$ from $1$ to $0$ during the first 50,000 time steps. In the Minecart domain, we linearly annealed $\epsilon$ from $1$ to $0.05$ during the first 50,000 time steps.
In the MO-Hopper domain, we followed the original TD3 algorithm and added a zero-mean Gaussian noise with standard deviation of $0.02$ to the actions outputted by the actor network.

\subsection{Tabular GPI-PD}
\label{sec:tabular}

In Algorithm~\ref{alg:tabular-gpi-pd}, below, we introduce a tabular version of the GPI-PD algorithm presented in the paper. This algorithm was used in the Deep Sea Treasure experiments. Here, we used a learning rate of $\alpha=0.3$.

\begin{algorithm}[h!]
\caption{Tabular \textbf{GPI} \textbf{P}rioritized \textbf{D}yna (\textbf{GPI-PD})}
\label{alg:tabular-gpi-pd}

\DontPrintSemicolon
\SetKwInOut{Input}{Input}
\SetKwProg{everyn}{}{ do}{}

\Input{MOMDP M, timesteps per iteration $N$, learning rate $\alpha$}

Initialize tabular model $\model$, buffer $\buffer$\;
$\Vset \leftarrow \{\}, \Pi \leftarrow \{\}$\;
$\w \leftarrow [1,0,...,0]$\;
\For{$t = 0 ... \infty$}{
\everyn(\Comment*[f]{GPI Linear Support}){Every $N$ time steps}{
    Add $\vect{v}^{\pi_{\w}}$ to $\Vset$\;
    $\Pi, \Vset \leftarrow \mathrm{RemoveDominated}(\Pi,\Vset)$\;
    $\Wspace_{\mathrm{corner}} \leftarrow \mathrm{CornerWeights(\Vset)}$\;
    $\w \leftarrow \argmax_{\w\in\Wspace_{\mathrm{corner}}}
   (v^{\gpi}_{\w} - \max_{\pi\in\Pi} v^{\pi}_{\w})$\;
    Initialize new policy $\pi_{\w}$ with $\vect{Q}^{\pi_{\w}}(s,a) \leftarrow \vect{Q}^{\pi'}(s,a), \forall (s,a)\in\Sspace{\times}\Aspace$ , where $\pi' = \argmax_{\pi\in\Pi} \vect{v}^{\pi} \cdot \w$\;
    Add $\pi_{\w}$ to $\Pi$\;
}
\If{$S_t$ is terminal}{
    $S_t \leftarrow$ sample state from initial state distribution $\mu$\;
}

$A_t \leftarrow \pigpi(S_t; \w)$  \ \Comment*[r]{Follow GPI policy} 
Execute $A_t$, observe $S_{t+1}$ and $\vect{R}_t$\;

$A' \leftarrow  \pigpi(S_{t+1}; \w)$\;
$\vect{Q}^{\pi_{\w}}(S_t,A_t) \leftarrow \vect{Q}^{\pi_{\w}}(S_t,A_t) + \alpha (\vect{R}_t + \gamma\vect{Q}^{\pi_{\w}}(S_{t+1},A') - \vect{Q}^{\pi_{\w}}(S_t,A_t))$\;

\For{$\pi_{\w'} \in \Pi$}{
$A' \leftarrow  \pigpi(S_{t+1}; \w')$\;
$\vect{Q}^{\pi_{\w'}}(S_t,A_t) \leftarrow \vect{Q}^{\pi_{\w'}}(S_t,A_t) + \alpha (\vect{R}_t + \gamma\vect{Q}^{\pi_{\w'}}(S_{t+1},A') - \vect{Q}^{\pi_{\w'}}(S_t,A_t))$\;
}

Add $(S_t,A_t,\vect{R}_t,S_{t+1})$ to $\buffer$ with priority $P_{\w}(S_t,A_t)$\;

Update the tabular model $\model$ with the experience  $(S_t,A_t,\vect{R}_t,S_{t+1})$\;

\For(\Comment*[f]{GPI-Prioritized Dyna}){$H$ Dyna planning steps}{
    Sample $(S, A) \sim \buffer$ according to $P_{\w}$ \;
    
    $(S', \vect{R}) \sim \model(\cdot|S,A)$\;
    
    $A' \leftarrow  \pigpi(S'; \w)$\;
    $\vect{Q}^{\pi_{\w}}(S,A) \leftarrow \vect{Q}^{\pi_{\w}}(S,A) + \alpha (\vect{R} + \gamma\vect{Q}^{\pi_{\w}}(S',A') - \vect{Q}^{\pi_{\w}}(S,A))$\;
    
    Update priority $P_{\w}(S,A)$\;
    \For{$\pi_{\w'} \in \Pi$}{
    $A' \leftarrow  \pigpi(S'; \w')$\;
    $\vect{Q}^{\pi_{\w'}}(S,A) \leftarrow \vect{Q}^{\pi_{\w'}}(S,A) + \alpha (\vect{R}_t + \gamma\vect{Q}^{\pi_{\w'}}(S',A') - \vect{Q}^{\pi_{\w'}}(S,A))$\;
    }

}

}

\end{algorithm}

\subsection{Continuous-Action GPI-PD}

In order to allow our algorithms to tackle problems with continuous action spaces, we extended TD3~\cite{Fujimoto+2018} to the MORL setting; we call this variant MOTD3.
In particular, we extended TD3 critic networks to receive as input the weight vector $\w$ and output multi-objective values: $\vect{Q}_{\theta}(s,a,\w) : \Sspace {\times} \Aspace {\times} \Wspace \mapsto \mathbb{R}^m$. The policy network, $\pi_{\phi}(s,\w)$, was also conditioned on the weight vector.
The MOTD3 loss function of each of the two critic network, with respect to their corresponding parameters $\{\theta_i\}_{i\in\{1,2\}}$, for a given transition $(s,a,\vect{r},s')$ and weight vector $\w$, is
\begin{equation}
    \mathcal{L}(\theta;\w) =  (\vect{y} - \vect{Q}_{\theta}(s,a,\w))^2,
\end{equation}
where
\begin{equation}
    \vect{y} = \vect{r} + \gamma \vect{Q}_{\theta^{-}_{k}}(s',a',\w), \quad k = \argmin_{i\in\{1,2\}} \vect{Q}_{\theta^{-}_{i}}(s',a',\w) \cdot \w, \quad a' = \pi_{\phi}(s',\w).
\end{equation}
MOTD3's actor network parameters, $\phi$, are then updated following the deterministic policy gradient:
\begin{equation}
    \nabla J(\phi;\w) = \nabla_a \vect{Q}_{\theta}(s,a,\w) \cdot \w |_{a=\pi_{\phi}(s,\w)} \nabla_{\phi}\pi_{\phi}(s,\w). 
\end{equation}

The challenge of extending GPI to the continuous action space is due to the fact that GPI requires computing a maximum over the action space $\Aspace$ in its definition  $\rightarrow$ (see Eq.(4) in the paper). Because this is infeasible in the case $\Aspace$ is infinite, we compute GPI by considering only the finite set of actions returned by the policy network for the weight vectors in the weight support $\Wsupport$. Let $\hat{\Aspace} = \{a \ | \ a = \pi_{\phi}(s,\w) \ \text{for all} \ \w \in \Wsupport\}$ be the aforementioned finite set of actions. The GPI policy is then computed as:
    \begin{eqnarray}
\label{eq:fa-gpi-td3}
   \pigpi(s;\w) \in \argmax_{a\in\hat{\Aspace}}\max_{\w'\in\Wset} \vect{Q}_{\theta}(s,a,\w') \cdot \w .
\end{eqnarray}

\section{Related Work}

In this section, we briefly discuss some of the most relevant related work.

\paragraph{Model-Free MORL}
Several \textit{model-free} MORL algorithms have been proposed in the literature ~\cite{vanMoffaert&Nowe2014,Parisi+2017,Abdolmaleki+2020,Hayes+2022}. Here, we discuss the ones that are most related to our work.
%
In \cite{Abels+2019}, Abels et al. proposed training neural networks capable of predicting multi-objective action-value functions when conditioned on state and weight vector. Yang et al.~\cite{Yang+2019} extended the previously-mentioned approach to increase sample efficiency by introducing a novel operator for updating the action-value neural network parameters.
%
Our method differs with respect to these in that \textit{(i)} we introduced a novel, principled prioritization scheme to actively select weights on which the action-value network should be trained, in order to more rapidly learn a CCS (Section 3.2); and \textit{(ii)} we introduced a principled prioritization method for sampling previous experiences for use in Dyna planning to accelerate learning optimal policies for particular agent preferences. The methods above, by contrast, use uniform sampling strategies that implicitly assume that all training data is equally important to rapidly identify optimal solutions.
%
Alegre et al.~\cite{Alegre+2022} showed how to use GPI in MORL settings, specifically in case the OLS algorithm is used to learn a CCS.
%
Importantly, however, the heuristic used by OLS to select on which weight vectors to train is based on upper bounds that are frequently exceedingly optimistic and loose. This means that OLS often focuses its training efforts on optimizing policies that do not necessarily improve the maximum utility loss incurred by the resulting CCS. Our method, by contrast, uses a technique for selecting  which preferences to train on via a mathematically principled technique for determining a lower bound on performance improvements that are formally guaranteed to be achievable. This results in a novel active-learning approach that more accurately infers the ways by which a CCS can be rapidly improved.
%
Xu et al.~\cite{Xu+2020icml} proposed an evolutionary algorithm that trains a population of agents specialized in different weight vectors. We used a single neural network conditioned on the weight vector to simultaneously model \textit{all} policies in the CCS. Fig. 4a shows that our approach surpasses the performance of this algorithm, even when given ten times fewer environment interactions.

\paragraph{Model-Based MORL}
Compared to model-free MORL algorithms, model-based MORL methods have been relatively less explored. Wiering et al.~\cite{Wiering+2014} introduced a tabular multi-objective dynamic programming algorithm. %
Yamaguchi et al.~\cite{Yamaguchi+2019} proposed an algorithm that learns models for predicting expected multi-objective reward vectors, rather than multi-objective Q-functions. Importantly, they tackle the average reward setting, rather than the cumulative discounted return setting. \cite{Yamaguchi+2019} is limited to discrete state spaces and focuses its experiments on small MDPs with at most ten states. Similarly, \cite{Wiering+2014} assumes both discrete and deterministic MOMDPs.
%
Similarly, the model-based method 
in \cite{Agarwal+2022} can tackle tabular, discrete 
settings---even though it supports non-linear utility functions. 
%
The methods proposed in \cite{Wang&Sebag2012,Perez+2015,Painter+2020,Hayes+2021aamas} investigate multi-objective Monte Carlo tree search approaches, but unlike our algorithm, require prior access to known transition models of the MOMDP. 

\paragraph{Dyna Search-Control}
The case in which the agent performs Dyna without learning a model, and instead using only a experience replay buffer containing old experiences, has been extensively explored~\cite{Lin1992,Schaul+2016,Fujimoto+2020,Sinha+2022}.
Nonetheless, using a learned model has been shown to introduce many advantages and increase the sample-efficiency of deep RL algorithms~\cite{Janner+2019,Abbas+2020,Hafner+2021}. Among them is the capability of querying next states for on-policy actions, whereas experience replay can only replay (likely off-policy) samples previously stored.
%
Janner et al.~\cite{Janner+2019} combines Dyna with deep RL by learning an ensemble of probabilistic neural networks~\cite{Chua+2018}, and proposes using short model rollouts to avoid cumulative model prediction errors.
%
Recently, Pan et al.~\cite{Pan+2022} showed limitations of prioritized experience replay~\cite{Schaul+2016}, and proposed to perform Dyna planning using a technique that generates states with higher TD-error, and simultaneously shows a good coverage of the state space.
%
However, these techniques are for the standard RL setting where a single policy is learned for a single reward function. 
We introduced a technique that takes into account the policies that were already learned for different preferences over multiple reward functions.
%

\paragraph{Universal Successor Features}
Alegre et al.~\cite{Alegre+2022} showed that multi-objective action-value functions are mathematically equivalent to successor features (SFs)~\cite{Barreto+2017}, in case multi-objective reward vectors are interpreted as linear reward features. They showed that learning a CCS over SFs is equivalent to solving the problem of optimal policy transfer in the case in which the tasks are linear combinations of features.
Following the same principles, we can see conditioned neural networks $\vect{Q}_{\theta}(s,a,\w)$ as a universal successor feature approximator~\cite{Borsa+2019}.
Hence, the contributions of this paper may also be employed to increase the sample efficiency of SF-based algorithms.

\bibliographystyle{ACM-Reference-Format} 
\bibliography{AL,MZ,OURS,stringDefs}